\providecommand{\eg}{\textit{e.g.}\ }
\providecommand{\ie}{\textit{i.e.}\ }
\theoremstyle{plain}
\newtheorem{theorem}{Theorem}[section]
\theoremstyle{definition}
\theoremstyle{remark}
\icmltitlerunning{Normality Calibration in Semi-supervised Graph Anomaly Detection}
\begin{document}

\twocolumn[
  \icmltitle{Normality Calibration in Semi-supervised Graph Anomaly Detection}

  % It is OKAY to include author information, even for blind submissions: the
  % style file will automatically remove it for you unless you've provided
  % the [accepted] option to the icml2026 package.

  % List of affiliations: The first argument should be a (short) identifier you
  % will use later to specify author affiliations Academic affiliations
  % should list Department, University, City, Region, Country Industry
  % affiliations should list Company, City, Region, Country

  % You can specify symbols, otherwise they are numbered in order. Ideally, you
  % should not use this facility. Affiliations will be numbered in order of
  % appearance and this is the preferred way.
  % \icmlsetsymbol{equal}{*}

\icmlsetsymbol{equal}{*}
  \icmlsetsymbol{cor}{$\dagger$}

  \begin{icmlauthorlist}
    \icmlauthor{Guolei Zeng}{equal,ox,smu}
    \icmlauthor{Hezhe Qiao}{equal,smu}
    \icmlauthor{Guoguo Ai}{nj}
    \icmlauthor{Jinsong Guo}{cor,unli}
    \icmlauthor{Guansong Pang}{cor,smu}
    % \icmlauthor{Firstname6 Lastname6}{sch,yyy,comp}
    % \icmlauthor{Firstname7 Lastname7}{comp}
    %\icmlauthor{}{sch}
    % \icmlauthor{Firstname8 Lastname8}{sch}
    % \icmlauthor{Firstname8 Lastname8}{yyy,comp}
    %\icmlauthor{}{sch}
    %\icmlauthor{}{sch}
  \end{icmlauthorlist}

  \icmlaffiliation{ox}{University of Oxford}
  \icmlaffiliation{smu}{Singapore Management University}
  \icmlaffiliation{nj}{Nanjing University of Science and Technology}
  \icmlaffiliation{unli}{Unlimidata Limited}
  % \icmlaffiliation{sch}{School of ZZZ, Institute of WWW, Location, Country}

  \icmlcorrespondingauthor{Guansong Pang}{gspang@smu.edu.sg}
  \icmlcorrespondingauthor{Jinsong Guo}{jinsong.guo@unlimidata.com}

  % % \icmlcorrespondingauthor{Firstname2 Lastname2}{first2.last2@www.uk}

  % % You may provide any keywords that you find helpful for describing your
  % % paper; these are used to populate the "keywords" metadata in the PDF but
  % % will not be shown in the document
  \icmlkeywords{Machine Learning, ICML}

  \vskip 0.3in
]

% this must go after the closing bracket ] following \twocolumn[ ...

% This command actually creates the footnote in the first column listing the
% affiliations and the copyright notice. The command takes one argument, which
% is text to display at the start of the footnote. The \icmlEqualContribution
% command is standard text for equal contribution. Remove it (just {}) if you
% do not need this facility.

% Use ONE of the following lines. DO NOT remove the command.
% If you have no special notice, KEEP empty braces:
% \printAffiliationsAndNotice{}  % no special notice (required even if empty)
% Or, if applicable, use the standard equal contribution text:
\printAffiliationsAndNotice{\icmlEqualContribution}

\begin{abstract}
% Graph anomaly detection (GAD) has attracted growing interest for its crucial ability to uncover irregular patterns in broad applications.
Semi-supervised graph anomaly detection (GAD), which assumes a subset of labeled normal nodes for training, is widely studied. However, existing methods learn normality only from these labeled nodes, often overfitting their patterns and causing high detection errors, such as many false positives. To overcome this limitation, we propose \textbf{GraphNC}, a \underline{graph} \underline{n}ormality \underline{c}alibration framework that leverages both labeled and unlabeled data to calibrate the normality from a teacher (a pre-trained semi-supervised GAD model) jointly in anomaly score and representation spaces.
GraphNC includes two main components, anomaly \underline{score} \underline{d}istribution \underline{a}lignment (\textbf{ScoreDA}) and perturbation-based \underline{norm}ality \underline{reg}ularization (\textbf{NormReg}). ScoreDA optimizes our model’s anomaly scores by aligning them with the teacher’s score distribution. Because the teacher provides accurate scores for most normal nodes and some anomalies, this alignment pulls the scores of the two classes toward opposite ends, making them more separable. To reduce the impact of inaccurate teacher scores, NormReg regularizes normality in the representation space, making normal node representations more compact via a perturbation-guided consistency loss applied only to the labeled nodes. Comprehensive experiments on six benchmarks demonstrate that GraphNC (1) consistently and substantially enhances the performance of teacher models from different GAD methods (2) achieves new state-of-the-art performance. Our code is available at
\renewcommand\UrlFont{\color{blue}} \url{https://github.com/mala-lab/GraphNC}.

\end{abstract}

\section{Introduction}

Graph anomaly detection (GAD), aiming to identify irregular nodes in a graph, has gained increasing attention due to its critical role in real applications such as the detection of spams in social networks and frauds in financial networks \citep{pang2021deep, ma2021comprehensive, qiao2025deep, yan2025address}. 
Semi-supervised GAD, which leverages a subset of labeled normal nodes to learn normal patterns, has gained significant attention in real-world applications \citep{qiao2024generative,ai2026semi}. This GAD setting is practical in that normal nodes often account for the majority in a graph, and thus, it is significantly less costly to obtain compared to abnormal samples that are scarce and their occurrence typically involves high financial/reputation loss.  

However, compared to unsupervised GAD that is more popular, less studies have been done on semi-supervised GAD. Furthermore, existing semi-supervised GAD methods are primarily built on the limited labeled normal nodes, preventing them from learning complete normal patterns, thus inclining to overfit the annotated normality \citep{ding2019deep,wang2021one, ding2021inductive, fan2020anomalydae, qiao2024generative,ai2026semi}.  As a result, these methods suffer from high detection errors—\eg normal nodes that are dissimilar to the labeled normal nodes are detected as anomalies (false positives) and vice versa (false negatives), especially the former one—due to the large overlapped anomaly scores for the normal  and abnormal nodes. These issues can be observed in the results of a recent state-of-the-art (SOTA) method GGAD \citep{qiao2024generative} in Figs. \ref{fig:example} (a)
and (b).
% on two commonly used GAD datasets
% where a recent state-of-the-art (SOTA) method 
% , where we report the number of normal nodes misclassified as abnormal of the most semi-supervised method, 
% GGAD \citep{qiao2024generative}
% , on the two datasets, including Photo \citep{shchur2018pitfalls} and Tolokers \citep{mcauley2015image}, we observe the
% have a high False Positive Rate (FPR) or False Negative Rate (FNR), especially the former@. 
% Therefore, the score distribution of normal nodes
% produces largely overlapped anomaly scores for the normal  and abnormal nodes.
% , see Fig. \ref{fig:example}(b). 
The same phenomenon can also be observed in other semi-supervised methods like data reconstruction and one-class classification-based methods (see App. \ref{app:Additional}). 
% These limitations greatly limit their applicability in real-world scenarios; therefore, 

%  \begin{wrapfigure}{r}{0.45\textwidth}
%     \centering
%     \includegraphics[width=\linewidth]{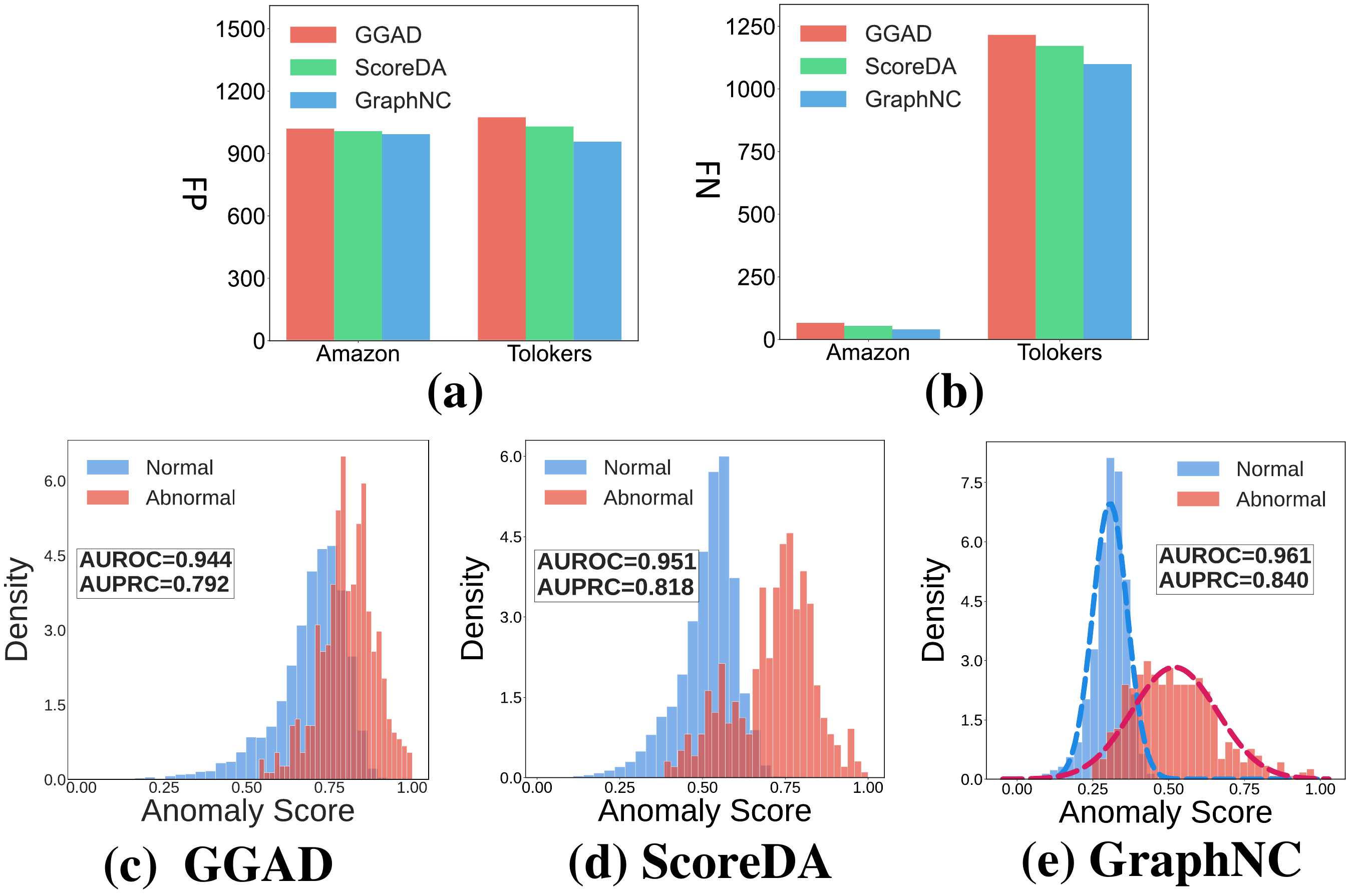}
%     \caption{ \textbf{(a)} False positive rate and \textbf{(b)} false negative rate results on Amazon \citep{dou2020enhancing} and Tolokers \citep{mcauley2015image}.   \textbf{(c)}, \textbf{(d)}, and \textbf{(e)} show the score distributions of normal and abnormal nodes for GGAD, ScoreDA, and ScoreDA+NormReg (\ie, GraphNC) on Amazon, where GGAD is used as a teacher model in both ScoreDA and GraphNC. 
%     }
%     \label{fig:example}  
% \end{wrapfigure} 
\begin{figure}[t]
  \centering
  \includegraphics[width=\columnwidth]{example.pdf}
  \caption{\textbf{(a)} False positive (FP) and \textbf{(b)} False negative (FN) results on Amazon \protect\citep{dou2020enhancing} and Tolokers \protect\citep{mcauley2015image}.   \textbf{(c)}, \textbf{(d)}, and \textbf{(e)} show the score distributions of normal and abnormal nodes for GGAD, ScoreDA, and ScoreDA+NormReg (\ie, GraphNC) on Amazon, where GGAD is used as a teacher model in both ScoreDA and GraphNC. }
  \label{fig:example}
\end{figure}

To address these issues, we propose \textbf{GraphNC}, a novel \underline{graph} \underline{n}ormality \underline{c}alibration framework that exploits both labeled and unlabeled data to calibrate the normality learned from a teacher model (\ie, a pre-trained semi-supervised GAD model) jointly in anomaly score and representation spaces. GraphNC includes two main components, namely anomaly \underline{score} \underline{d}istribution \underline{a}lignment (\textbf{ScoreDA}) and perturbation-based \underline{norm}ality \underline{reg}ularization (\textbf{NormReg}). ScoreDA optimizes the anomaly scores of our model by aligning them with the score distribution yielded by the teacher model. Due to accurate scores in most of the normal nodes and part of the anomaly nodes in the teacher model, the score alignment pulls the anomaly scores of the normal and abnormal classes toward the two ends, resulting in more separable anomaly scores, as shown in Fig. \ref{fig:example} (d). This helps reduce false positive and false negative rates.
% , while also lowering the false negative rate (FNR), see  Fig. \ref{fig:example} (a) and (b). 

However, due to their inherent limitation mentioned above, the teacher models inevitably produce some inaccurate anomaly scores, which can mislead the score optimization in ScoreDA. To mitigate this issue, NormReg is devised to regularize the graph normality in the representation space, enabling the student model to refine itself via a perturbation-guided consistency loss applied only to the labeled nodes. This normality calibration helps learn more compact normal representations, pulling the anomaly scores of normal nodes closer to each other and toward the lower end of the anomaly score distribution, as shown in Fig. \ref{fig:example} (e).

% , the integration of ScoreDA with NormReg can further make the score of normal nodes tightly clustered, demonstrating the effectiveness of NormReg. 
By jointly optimizing these two components, GraphNC learns more generalized normality representations and more discriminative anomaly scores. Besides, GraphNC is a generic framework where different pre-trained teacher models can be plugged into,  and it can obtain better GAD performance if the teacher model is stronger.
% is flexible and can be seamlessly integrated into various pre-trained semi-supervised GAD methods, leading to corresponding performance improvements. The score distribution of normal nodes and abnormal nodes of 
% reconstruction-based, DOMINANT \citep{ding2019deep} and one-class classification-based method, and  OCGNN \citep{wang2021one}, along with their corresponding student performance, can be found in the App. \ref{app:competing_methods}.
In summary, our contributions are as follows
 \begin{itemize}
\item We introduce GraphNC, a novel graph normality calibration framework that leverages both labeled normal nodes and unlabeled data to calibrate normality learning from the teacher model in both the score and feature spaces for semi-supervised GAD .

\item In GraphNC, we introduce two new components: anomaly score distribution alignment (ScoreDA) and perturbation-based normality regularization (NormReg). ScoreDA aligns the scores of our model with those of the pre-trained teacher model, which helps push the anomaly scores of the two classes toward opposite ends. NormReg mitigates the impact of inaccurate teacher scores by enforcing representation compactness among normal nodes during alignment.
% , thereby reducing misleading effects.

\item GraphNC is a flexible framework where different teacher models can be plugged and played, 
% including reconstruction-based, one-class-based, and anomaly-generation-based classifier approaches, and 
achieving consistently enhanced GAD improvement across three types of teacher models, including data reconstruction, one-class-based, and anomaly-generation-based approaches. 
% Further, GraphNC gets better performance when the teacher model is stronger. 
This is verified by our 
% \item We conduct 
comprehensive experiments on six benchmark datasets.
% , and the results demonstrate that our method consistently enhances the performance when apply it on the various teacher and  achieve state-of-the-art performance when it is applied on the strongest competing method.

\end{itemize}

\section{Related Work}

\noindent\textbf{Graph Anomaly Detection.} Existing GAD methods can be broadly categorized into unsupervised, semi-supervised, and fully supervised approaches \citep{ma2021comprehensive, qiao2025deep}. Unsupervised methods typically assume that no labeled nodes are known, and they learn normal patterns through some proxy task designs such as reconstruction, one-class classification, and adversarial learning \citep{ding2019deep,wang2021one,ding2021inductive}. Although this setting is widely used in anomaly detection, they overlook the abundance of normal samples, where normal nodes are less costly to obtain than the abnormal nodes. The fully supervised methods, on the other hand, rely on labeled normal and abnormal nodes and formulate the task as an imbalanced classification problem \citep{liu2021pick,tang2022rethinking,gao2023addressing,gao2023alleviating,chen2024consistency, liu2025survey}. These supervised methods rely on large labeled data,
% are generally trained in a supervised manner, emphasizing the encoder by strengthening the GNN backbone and incorporating tailored spectral modules. However, these methods rely on labeled anomaly information, which 
limiting their applicability in real-world scenarios, especially those where anomalies are very infrequent and very costly to collect; they may overfit labeled anomalies, failing to generalize to unseen anomalies \citep{wang2023open}. Semi-supervised methods are generally more practical for real-world applications, as only a subset of labeled normal nodes is required \citep{zhou2022unseen,qiao2024generative,qiao2025deep, ai2026semi}.

Only limited work has been done in this line. GGAD \citep{qiao2024generative} and RHO \cite{ai2026semi} are two recent works. GGAD \citep{qiao2024generative} utilizes the labeled normal nodes to generate synthetic outliers that mimic real anomalies, and then trains a binary classifier. It also shows that popular unsupervised methods \cite{ding2019deep, wang2021one, ding2021inductive, hezhe2023truncated} can be adapted to the semi-supervised settings by minimizing their loss on the normal data only. 
RHO \citep{ai2026semi} aims to address the limitation that existing graph filters may fail to fit subsets of normal nodes with varying levels of homophily in semi-supervised GAD by learning adaptive spectral filters. However, both RHO and related methods mainly learn normality from the labeled normal nodes, which may make the model prone to overfitting the annotated normality patterns.

\noindent\textbf{Semi-supervised Learning.} Semi-supervised learning aim to leverage a small amount of labeled data together with a large amount of unlabeled data to train models, aiming to achieve high performance while reducing the cost of data annotation \citep{yang2022survey, chen2022semi}. Representative semi-supervised learning approaches include pseudo-labeling and consistency-based methods. The pseudo-labeling with confidence thresholding is highly successful and widely-adopted, which is typically built on the smoothness assumption that neighboring data points in the feature space tend to exhibit similar labels \citep{grandvalet2004semi, chen2023softmatch}. However, such a mechanism requires a quantity–quality trade-off, which undermines the learning process, since incorporating more unlabeled data helps increase data coverage but often introduces noisy or unreliable pseudo-labels \citep{pham2021meta,kage2024review}. 
The consistency-based model enforces prediction consistency under stochastic noise or data augmentation \citep{laine2016temporal}. Recent studies such as RankMatch \citep{mai2024rankmatch}, InterLUDE \citep{huang2024interlude}, and SCHOOL \citep{mo2024revisiting}, have been proposed in different data modalities, including visual images, text, and graphs, aiming to encourage stable predictions across views, thereby enhancing generalization and reducing reliance on labeled data \citep{gui2024survey}. However, they are mainly focused on classification tasks, different from our anomaly detection task that has only one-class labels and unbounded distribution in the anomaly class.

% Early works such as the $\pi$-Model and Temporal Ensembling encourage a model to produce stable predictions for the same sample under stochastic noise or data augmentations, while temporal averaging over training steps improves target stability \citep{sajjadi2016regularization, laine2016temporal}. 

% Some variants like VAT \citep{miyato2018virtual}, UDA \citep{cubuk2020randaugment}, and ICT \citep{verma2022interpolation} are also proposed to mitigate confirmation bias while preserving the benefits of strong data augmentations. 

Many semi-supervised anomaly detection models that are trained on normal data have been proposed for anomaly detection tasks, but they are focused on visual data \citep{wu2024deep,cao2024survey}, failing to capture the complex structural information in the graph. There are also methods that aim to leverage small labeled anomaly data and large unlabeled data, such as DevNet \citep{pang2019deep}, Deep SAD \citep{ruff2019deep}, DPLAN \citep{pang2021toward}, PReNet \citep{pang2023deep}, and RoSAS \citep{xu2023rosas}, which explore a different problem setting from ours.

\section{Problem Statement}

\noindent \textbf{Notations.} Given an attributed graph $\mathcal{G} = (\mathcal{V}, \mathcal{E}, \mathbf{X})$, where $\mathcal{V}$ denotes the node set with $v_{i} \in \mathcal{V}$ and $\left | \mathcal{V}  \right | = N$ representing the total number of the node, $\mathcal{E}$ denotes the edge set, and $ \mathbf{X} =\left[\mathbf{x}_{1}, \mathbf{x}_{2}, \ldots, \mathbf{x}_{N}\right] \in \mathbb{R}^{N \times M}$ is a set of node attributes. Each node $v_{i}$ has a $M$-dimensional attribute $\mathbf{x}_{i} \in \mathbb{R}^M$.  The topological structure of $\mathcal{G}$ is represented by an adjacency matrix $\mathbf{A} \in \mathbb{R}^{N \times N}$. 

\textbf{Semi-supervised GAD.} Let $\mathcal{V}_a$, $\mathcal{V}_n$
be two disjoint subsets of $\mathcal{V}$, where $\mathcal{V}_a$ represents abnormal node set and $\mathcal{V}_n$ represents normal node set, and typically the number of normal nodes is significantly greater than the abnormal nodes, \ie, $|\mathcal{V}_n| \gg |\mathcal{V}_a|$, then the goal of semi-supervised GAD is to learn the mapping function $f \to \mathbb{R}$, such that $f(v) < f(v')$, where $\forall v \in {{\cal V}_n},{v^\prime } \in {{\cal V}_a}$ , given a set of labeled normal nodes $\mathcal{V}_l \subset \mathcal{V}_n$, with $|\mathcal{V}_l| = R$, and no access to any labels of the abnormal nodes. $\mathcal{V}_u = \mathcal{V}/ \mathcal{V}_l$ is the set of the unlabeled nodes and used as test data.

\textbf{Graph Neural Networks for Representation Learning.}
Graph Neural Networks (GNNs) are typically employed to learn the representation of each
node due to their strong representation ability. This can be formulated as follows
\begin{equation}
    \mathbf{H}^{(\ell)}=\mathrm{GNN}\left(\mathbf{A}, \mathbf{H}^{(\ell-1)} ; \mathbf{W}^{(\ell)}\right)
\end{equation}

where  $\mathbf{W}^{(\ell)}$ are the parameters, which are updated during training, and $\mathbf{H}^{(\ell)} = \{\textbf{h}_1,\textbf{h}_2, 
..., \textbf{h}_N\}$ denote the $l$-th (final) layer embeddings, which are the representation of all nodes at layer $l$. In this paper, our student model adopts a 2-layer GNN as the backbone for representation learning, while the teacher model uses the recommended GNN architecture in its original work.

\section{Methodology}

% \begin{figure}
% \centering
% \includegraphics[width=1\textwidth]{iclr2026/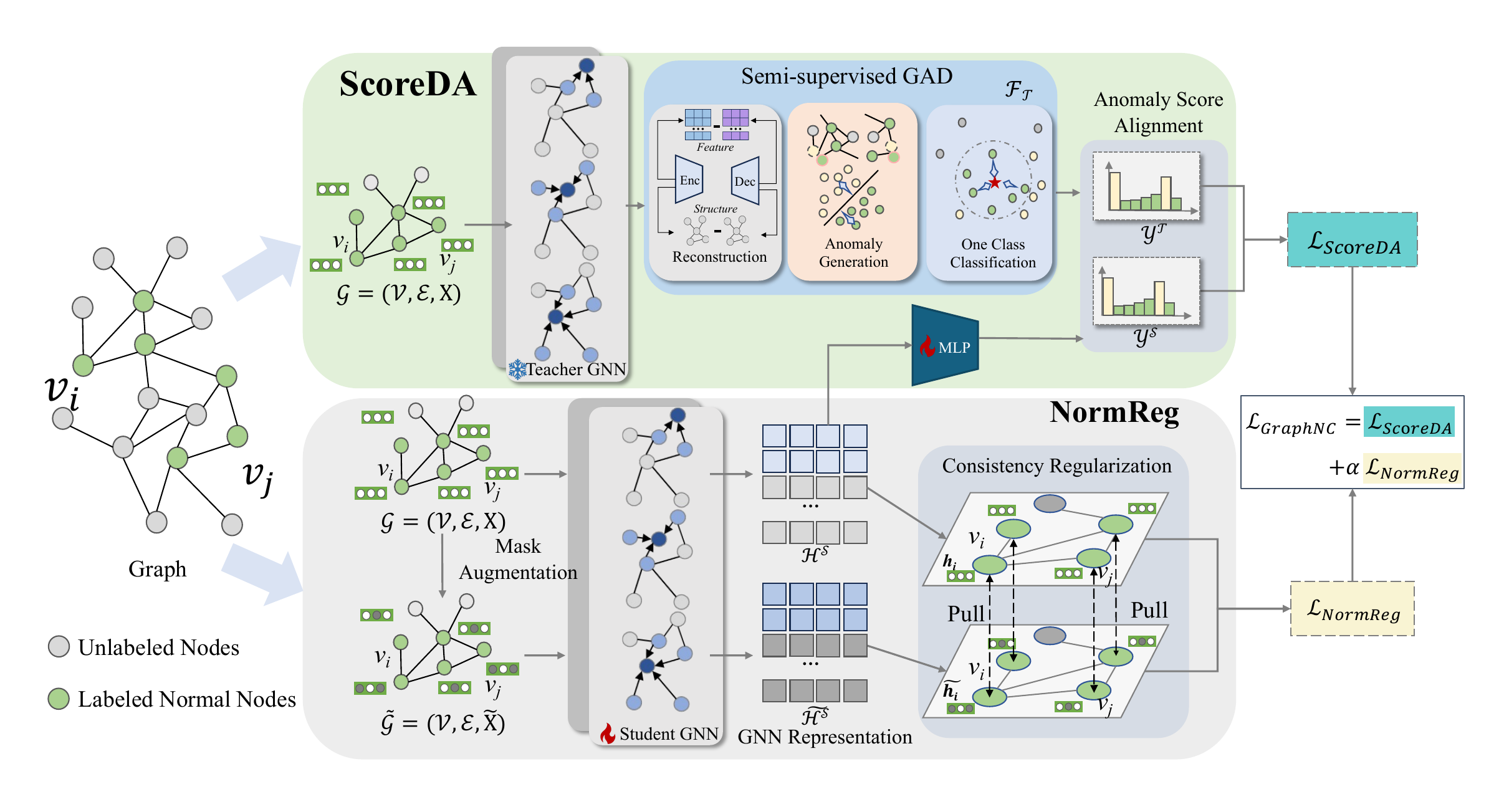}
% \caption{Overview of GraphNC. The input graph consists of a small labeled normal node set and a large unlabeled node set. GraphNC is based on a teacher-student network framework. ScoreDA aims to align the anomaly scores of the student network with the anomaly scores yielded by an existing semi-supervised GAD method to calibrate the normality in the score space. NormReg, on the other hand, is introduced to utilize a consistency regularization loss function in representation space to learn more compact representations of normal nodes, thereby reducing the negative impact of inaccurate anomaly scores that may be produced by the teacher model. The teacher network is pre-trained first, and it is frozen afterward, with only the student network being trained when jointly optimizing ScoreDA and NormReg.}
% \label{fig:framework}
% \end{figure}
\begin{figure*}[t]
  \centering
  \includegraphics[width=\textwidth]{Frame.pdf}
  \caption{Overview of GraphNC. The input graph consists of a small labeled normal node set and a large unlabeled node set. GraphNC is based on a teacher-student network framework. ScoreDA aims to align the anomaly scores of the student network with the anomaly scores yielded by an existing semi-supervised GAD method to calibrate the normality in the score space. NormReg is introduced to utilize a consistency regularization loss function in representation space to learn more compact representations of normal nodes, thereby reducing the negative impact of inaccurate anomaly scores that may be produced by the teacher model. The teacher network is pre-trained first, and frozen afterward, with only the student network being trained when jointly optimizing ScoreDA and NormReg.}
  \label{fig:framework}
\end{figure*}

% \hz{The white space below the figure is too large, some letters in the formula should be added, MLP layer is also trainable, add text of encoder and decoder in the reconstruction, add the legend for some arrow,  indicate the module name in the figure}

\subsection{Overview of the Proposed GraphNC}
The overview of the proposed GraphNC is shown in the Fig. \ref{fig:framework}. Given a pre-trained semi-supervised GAD model, it includes two steps during training. (1) Given a graph with only a set of labeled normal nodes, ScoreDA aligns the scores of our student model with the anomaly score distribution produced by the pre-trained detector as the teacher model, thereby achieving normality calibration in the score space.
(2) NormReg subsequently complements score alignment by mitigating noise in the teacher model’s anomaly scores through a consistency constraint in the representation space on the labeled normal nodes, encouraging them to learn more compact normality representations.
During inference, the output scores from the student model are used to derive the anomaly scores.

\subsection{Anomaly Score Distribution Alignment (ScoreDA)}
The simplistic normal pattern extraction in existing semi-supervised GAD models, trained on only a limited set of labeled normal samples, often is prone to overfit the patterns in the labeled data, resulting in high detection errors, such as elevated false positive rates, \ie, many normal samples are misclassify as anomalies. We leverage ScoreDA to align the output scores of our model with the score distribution produced by the teacher model. As the anomaly score distribution of the pre-trained teacher model can correctly characterize most nodes, including most normal nodes and part of the abnormal nodes in the score space, by aligning with this teacher scores, the student model is enforced to push scores for the normal and abnormal classes toward the two opposite ends, resulting in more separable anomaly scores. This effectively reduces the misclassification of normal samples, leading to a lower false positive rate, while also yielding a decrease in the false negative rate.

Formally, let $\mathcal{F}_\mathcal{T}$ be the pre-trained teacher model,
% , which is typically trained on the labeled normal nodes $\mathcal{V}_l$ and optimized using the normality characterized approach like reconstruction, one-class classification,  binary classification under anomaly generation, etc. 
$\mathcal{Y}^\mathcal{T}= \mathcal{F}_{\mathcal{T}}{(\mathbf{X}, \mathcal{G}; \Theta)}$ be the output scores for the entire node set, where $\Theta$ are the trainable parameters, and $\mathcal{Y}^\mathcal{T} =\{y_1^\mathcal{T},y_2^\mathcal{T}, ...,y_N^\mathcal{T} \}$ be the anomaly score distribution of samples,
% which is selected from continuous values within the range (0, 1), and 
then we leverage a student model $\mathcal{F}_{S}$ with parameter $\Phi$ which is implemented as the combination of GNN and MLP layers (\ie, $\mathcal{F}_{GNN}$ with parameters $\Omega$ and $\mathcal{F}_{MLP}$ with parameters $\phi$ respectively) to align the score set $\mathcal{Y}^\mathcal{T}$. 
% We align the output of our model with the anomaly score distribution of the teacher model. 
For the student model, let $\mathcal{H}^{S}$ be the output of the GNN student model, for a node $v_i$, the representation is then denoted by ${\bf{h}}_i^{\mathcal{S}} = \mathcal{F}_{GNN}({\bf{x}}_i, \mathcal{G}; \Omega)$. On top of that, we apply a MLP layer to obtain the predicted score, $y^\mathcal{S}_i = \mathcal{F}_{MLP}({\bf{h}}_i^{\mathcal{S}}; \phi)$.  Finally,  we employ the MSE loss to minimize the discrepancy between the anomaly score from the student model $\mathcal{Y}^\mathcal{S}$ and  $\mathcal{Y}^\mathcal{T}$ yielded by the teacher model as follows:
\begin{equation}
    \mathcal L_{ScoreDA} = \frac{1}{|{{\cal V}}|}\sum_{v_i\in {\mathcal{V}}} \|y^\mathcal{S}_i- y^\mathcal{T}_i\|_2^2,
\end{equation}
where $|{\cal V}|$ is the number of all nodes, including both labeled and unlabeled nodes. By minimizing the score discrepancy between the teacher and the student, the accurate scores  in $\mathcal{Y}^\mathcal{T}$ helps enforce the clusters of the anomaly scores in the normal and abnormal classes in the two opposite ends, resulting in more separable anomaly scores.

\subsection{Perturbation-Guided Normality Regularization (NormReg)}

% \gs{the current writing flow is not coherent w.r.t. the key argument we want to make, i.e., this module rectifies the errors made in the teacher network (or the score distillation). See the following flow for your reference:
% 1) what could be the errors appearing in the module above, and why do they occur? 2) our key intuition for why this second module can rectify the errors. 3) detailed technical info and insights (including theoretical ones) for how the module alleviates the errors. \textbf{pls stop using only high-level/superficial statements; instead link all our statements to the technical details and theoretical support}
% }

Due to the inherent limitations in normality characterization and the reliance on a limited set of normal samples, teacher models inevitably produce some inaccurate scores. Solely applying score alignment can enforce the fitting of the student model to these inaccurate scores, thereby misleading the optimization of the student model.
To address this challenge, we further introduce NormReg to 
calibrate the learned normality in the representation space based on solely the labeled normal nodes, enabling the student to not only learn from the teacher but also self-refine.

To this end,
% enforce normality consistency regularization and encourage more compact representations of normal nodes, 
we utilize an node attribute-based masking mechanism that randomly masks a proportion $\omega$ of the 
attributes on labeled normal nodes to learn consistent representations for these nodes. This masking creates an augmented graph $\tilde{\mathcal{G}}$, which contains many variations of the labeled
%    \begin{wrapfigure}{r}{0.5\textwidth}
%     \centering
%     \includegraphics[width=\linewidth]{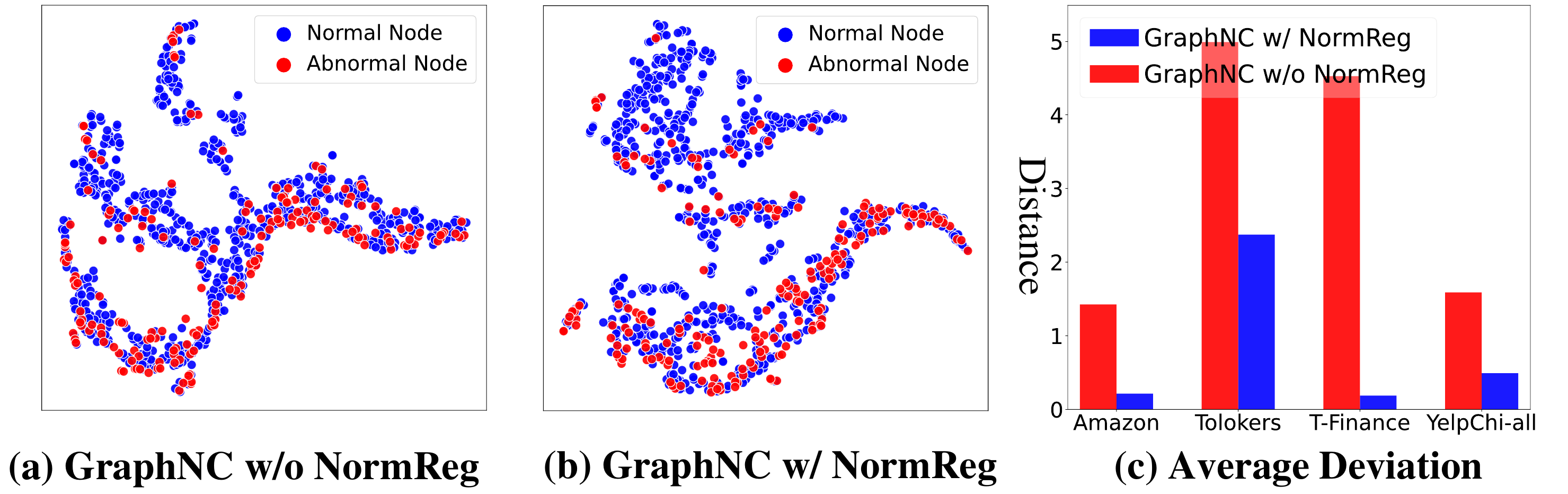}
%     \caption{(a) and (b) provide t-SNE visualization of the
% node representations for GraphNC with/without using NormReg. (c) The average deviation of the normal class on Tolokers.
%     }
%     \label{fig:homo}
%     \vspace{-1em}
% \end{wrapfigure} 
\begin{figure}[t]
  \centering
  \includegraphics[width=\columnwidth]{consistency.pdf}
  \caption{(a) and (b) provide t-SNE visualization of the node representations for GraphNC with/without using NormReg. (c) The average deviation of the normal class on Tolokers.}
  \label{fig:homo}
  % \vspace{-1em}
\end{figure}
normal nodes, simulating diverse normal patterns that may be different from the ones derived directly from the labeled nodes. Thus, enforcing consistency over the representations of these augmented nodes and the labeled normal nodes helps cluster the normality from both the labeled nodes and their augmented versions closer. Formally, let $\mathcal{H}_i^{\mathcal{S}}$ and $\tilde{\mathcal{H}}_i^{\mathcal{S}}$ be the original and augmented node representations in the student model respectively,
% \gs{it's crucial to justify why we do the masking/consistency alignment in this particular way; pls provide solid justification for it}
NormReg then minimizes the discrepancy between the representations of the original node and its augmented counterpart by optimizing the following loss function $\mathcal{L}_{NormReg}$:
\begin{equation}
{{\cal L}_{{NormReg}}} =  \frac{1}{{\left| {{{\cal V}_l}} \right|}}\sum\limits_{{v_i} \in {\cal V}_l} {\left\| {{{\bf{h}}_i^{\mathcal{S}}} - {{\widetilde {\bf{h}}}_i^{\mathcal{S}}}} \right\|_2^2},
\end{equation}
% \gs{the current equation format omits too many details. pls rewrite it to highlight them, such as separate them into two parts:labeled normal data and psuedo normal data, the masking/augmentation part.}
where  $\mathcal{V}_l$ are the labeled normal nodes, and ${{\bf{h}}_i^{\mathcal{S}}} \in \mathcal{H}_i^{\mathcal{S}}$and ${{\widetilde {\bf{h}}}_i^{\mathcal{S}}} \in \tilde{\mathcal{H}}_i^{\mathcal{S}}$ are the original and augmented representations for node $v_i$ respectively. 
% By applying the regularization, it can make our model learn a more compact representation for the normal nodes, mitigating the impact of inaccurate information and leading to a better separation for the normal class and abnormal class.
As shown in Figs. \ref{fig:homo} (a) and (b), by adding ${\cal L}_{{NormReg}}$, the representation distributions of the normal nodes obtained by GraphNC
% , which integrates NormReg with ScoreDA, on the Tolokers dataset \citep{mcauley2015image}. We observed that NormReg makes the representations of normal nodes
are more compact compared to GraphNC without using ${\cal L}_{{NormReg}}$.
% , demonstrating the effectiveness of NormReg. 
This demonstrates that the regularization effectively mitigates the negative impact of inaccurate anomaly scores from the teacher model. To provide more evidence, we also compute average deviation of normal nodes to the normal class prototype on four datasets, and it is clear that the deviation is significantly reduced after applying NormReg, 
% further demonstrating the effectiveness of NormReg in learning the compact representation, see 
as illustrated in Fig. \ref{fig:homo} (c).
Beyond the empirical demonstrations, our theoretical analysis also show that NormReg can well complement ScoreDA in lowering the detection errors.
% (\eg, false positive rates).
%\begin{theorem}\label{theorem}
%Let $\mu$ denote the latent true center of a normal class, and let $\epsilon_i^{(1)}$ and $\epsilon_i^{(2)}$ denote the perturbations or noise introduced by two different views. Then the representation of each normal node $i$ in the two views can be written as $h_i = \mu + \epsilon_i, \quad \widetilde{h}_i = \mu + \widetilde{\epsilon}_i$.
%Assume that the perturbations in the two views are statistically independent and have zero mean, i.e., $\mathbb{E}[\epsilon_i] = \mathbb{E}[\widetilde{\epsilon}_i] = 0.$
%Then, minimizing the consistency loss $\mathcal{L}_{NormReg}$ reduces the intra-class variance, making embeddings more compact around center $\mu$.
%\end{theorem}

%The proof can be found in App. \ref{theorem_analyse}. According to Theorem \ref{theorem}, the designed loss ${\cal L}_{{NormReg}}$ is complementary to ScoreDA to help the student refine itself from the labeled normal nodes.

\begin{theorem}\label{theorem}
Let $\sigma_\mathcal{T}^2$ and $\sigma_\mathcal{S}^2$ denote the variance of the anomaly scores yielded by the teacher model and the student model in the normal class, respectively, then under our GraphNC framework, the student model achieves shrinking score variance (\ie, $\sigma_\mathcal{S}^2 < \sigma_\mathcal{T}^2$) by minimizing $\mathcal{L}_{NormReg}$ while applying $\mathcal{L}_{ScoreDA}$, thereby reducing False Positive Rate (FPR) and False Negative Rate (FNR).
\end{theorem}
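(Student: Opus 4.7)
The plan is to split the argument into two logical steps: first establish that the joint optimization produces a student whose normal-class score variance is strictly smaller than the teacher's, and then translate that variance contraction into reductions in FPR and FNR via a standard tail bound.

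For the variance step, I would fix notation by writing the student score as $y_i^\mathcal{S} = g(\mathbf{h}_i^\mathcal{S})$ with $g = \mathcal{F}_{MLP}(\cdot;\phi)$, and assume $g$ is $L$-Lipschitz (a mild condition satisfied by MLPs with bounded weights and Lipschitz activations). Restricted to the normal class,
\begin{equation}
\sigma_\mathcal{S}^2 \;=\; \operatorname{Var}_{v_i \in \mathcal{V}_n}\!\bigl(g(\mathbf{h}_i^\mathcal{S})\bigr) \;\leq\; L^2\,\operatorname{Var}_{v_i \in \mathcal{V}_n}(\mathbf{h}_i^\mathcal{S}),
\end{equation}
so it suffices to prove that NormReg shrinks the representation variance of normal nodes below the level implied by ScoreDA alone. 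The ScoreDA term, being an unconditioned MSE match to the teacher over all of $\mathcal{V}$, would drive the student's normal-class score distribution to essentially copy the teacher's and hence have variance on the order of $\sigma_\mathcal{T}^2$. Adding NormReg introduces an additional penalty that, as written, pulls each labeled normal representation toward its augmented counterpart (I would first flag that the leading minus sign in $\mathcal{L}_{NormReg}$ must be read as a typographical artifact, since the surrounding text and Fig.\ 2 make clear that the intent is to \emph{minimize} the squared distance). Viewing the labeled nodes and their attribute-masked versions as a representative cover of the normality manifold, the consistency penalty forces the normal representations into a tight cluster around a common prototype, driving $\operatorname{Var}(\mathbf{h}_i^\mathcal{S})$ strictly below the ScoreDA-only baseline, hence $\sigma_\mathcal{S}^2 < \sigma_\mathcal{T}^2$ through the Lipschitz bound above.

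The second half of the proof connects variance to error rates. Fix the anomaly-decision threshold $\tau$ and let $\mu_n$ denote the mean normal-class score under the student. Because ScoreDA has already pushed normal scores toward the low end of the range, $\mu_n < \tau$, and a one-sided Chebyshev (or sub-Gaussian tail) bound gives
\begin{equation}
\mathrm{FPR}(\tau) \;=\; \Pr\bigl(y^\mathcal{S} > \tau \,\big|\, v \in \mathcal{V}_n\bigr) \;\leq\; \frac{\sigma_\mathcal{S}^2}{(\tau - \mu_n)^2},
\end{equation}
which is monotone in $\sigma_\mathcal{S}^2$, so variance shrinkage immediately lowers FPR. The FNR statement is symmetric: tightening the normal-score cluster permits $\tau$ to move downward without inflating FPR, and since ScoreDA concurrently drives the abnormal-class mean $\mu_a$ toward the opposite end of the range, the margin $(\mu_a - \tau)/\sigma_\mathcal{S}$ grows, so $\mathrm{FNR}(\tau)$ decreases as well by the same tail inequality applied on the abnormal side.

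The hardest step will be the generalization move inside the variance argument: NormReg is evaluated only on $\mathcal{V}_l$, yet the conclusion $\sigma_\mathcal{S}^2 < \sigma_\mathcal{T}^2$ concerns the entire normal population $\mathcal{V}_n$. To close this gap I would combine two ingredients: a GNN smoothness/homophily assumption (neighboring normals carry similar representations, so compactness propagates from labeled to unlabeled normals through the aggregation layers), and a concentration argument showing that random attribute masking approximates draws from the normal-data distribution, so the empirical consistency loss on $\mathcal{V}_l$ is an unbiased surrogate for its population counterpart on $\mathcal{V}_n$. Everything else---the Lipschitz inequality, the Chebyshev bound, and the monotonicity arguments---is routine and can be carried out in a few lines once the population-level variance inequality is secured.
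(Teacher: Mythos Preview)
Your two-step skeleton (variance shrinkage, then a tail bound monotone in the variance) matches the paper, and swapping Chebyshev for the paper's sub-Gaussian tail is harmless. The genuine gap is in how you reach $\sigma_\mathcal{S}^2 < \sigma_\mathcal{T}^2$. The Lipschitz inequality $\sigma_\mathcal{S}^2 \leq L^2\,\operatorname{Var}(\mathbf{h}_i^\mathcal{S})$ is only an \emph{upper} bound on the student score variance, so showing that NormReg pushes $\operatorname{Var}(\mathbf{h}_i^\mathcal{S})$ below its ScoreDA-only level does not by itself establish $\sigma_\mathcal{S}^2 < \sigma_\mathcal{T}^2$: you would still need $L^2\,\operatorname{Var}(\mathbf{h}_i^\mathcal{S}) < \sigma_\mathcal{T}^2$, and nothing in your argument controls that product relative to the teacher's score variance. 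Shrinking the right-hand side of an upper bound does not shrink the left-hand side, and the claim ``ScoreDA alone gives variance on the order of $\sigma_\mathcal{T}^2$'' is a score-level statement that the Lipschitz inequality cannot convert into a representation-level baseline.

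The paper handles the ScoreDA contribution without passing through representation space. It notes that the MSE-optimal student under $\mathcal{L}_{ScoreDA}$ is the conditional expectation $\mathcal{F}_\mathcal{S}^{*}(\mathbf{X}) = \mathbb{E}[T\mid\mathbf{X}]$, and the law of total variance $\operatorname{Var}(T) = \operatorname{Var}(\mathbb{E}[T\mid\mathbf{X}]) + \mathbb{E}[\operatorname{Var}(T\mid\mathbf{X})]$ then gives $\operatorname{Var}(\mathcal{F}_\mathcal{S}^{*}(\mathbf{X})) \leq \sigma_\mathcal{T}^2$ directly at the score level. NormReg is invoked only to make the inequality strict, via the decomposition $h_i = \mu_0 + \epsilon_i$, $\tilde h_i = \mu_0 + \tilde\epsilon_i$ with independent zero-mean noise so that $\mathbb{E}\|\epsilon_i - \tilde\epsilon_i\|^2 = \sigma_\epsilon^2 + \sigma_{\tilde\epsilon}^2$, and minimizing this sum tightens the embeddings around $\mu_0$. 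Your flagging of the $\mathcal{V}_l \to \mathcal{V}_n$ generalization issue is more careful than the paper (which elides it), and your movable-threshold FNR argument is arguably sounder than the paper's fixed-threshold complement step, but the core variance inequality needs the conditional-expectation / total-variance device rather than a one-sided Lipschitz bound.
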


The proof can be found in App. \ref{theorem_analyse}. According to Theorem \ref{theorem}, minimizing $\mathcal{L}_{NormReg}$ complements $\mathcal{L}_{ScoreDA}$ in reducing the intra-class score variance of normal classes in the student model, which can also be observed when comparing the scores of the normal class in Fig. \ref{fig:example}(e) to that in Fig. \ref{fig:example}(d),
% . 
% Therefore, during the process of knowledge distillation, the student model can bring normal-node embeddings more compact and simultaneously 
lowering both FPR and FNR, especially the FPR.

\subsection{Training and Inference}
\noindent \textbf{Training.} During training, the student model is optimized by aligning its score distribution with that of the teacher model, while also minimizing the consistency regularization.  To be specific, the total loss
function $\mathcal{L}_{GraphNC}$ is formulated as a combination of the $\mathcal{L}_{ScoreDA}$ and  $\mathcal{L}_{NormReg}$:
% which is formulated as 
\begin{equation}
    \mathcal{L}_{GraphNC} =  \mathcal{L}_{ScoreDA} + \alpha \mathcal{L}_{NormReg},
\end{equation}
where $\alpha$ is a hyper-parameter that adjusts the influence of normality consistency regularization in optimization.

\noindent \textbf{Inference.} During inference, since the student model $\mathcal{F}_{\mathcal{S}}$ calibrates the normality of the teacher and yields more accurate scores, its output scores can be directly employed as an anomaly score, which is defined as:
\begin{equation}
    S(v_i) = \mathcal{F}_{ \mathcal{S}
    }(v_i, {\bf{x}}_i, \mathcal{G}, \Phi^*),
\end{equation}
where $\Phi^*=\{\Omega^*, \phi^* \}$ is the optimized parameter set of the student model.

\begin{table*}[t]
\caption{AUROC and AUPRC results on six real-world GAD datasets. For each dataset, the best performance per column within each metric is boldfaced, with the second-best underlined. ``Avg'' denotes the average performance of each method.}
% \gs{there are commonly used datasets missing in the table}
\label{gen_auroc}
\begin{center}
\resizebox{0.95\textwidth}{!}{
\begin{tabular}{c|c|cccccc|c}
\hline
\hline
\multirow{2}*{\textbf{Metric}}&\multirow{2}*{\textbf{Method}} & \multicolumn{6}{c|}{\textbf{Dataset}}\\
&&Amazon &T-Finance &Reddit &YelpChi &Tolokers &Photo &Avg.\\
\hline
\multirow{10}{*}{AUROC}
&DOMINANT &0.8867&0.6167&0.5194&\underline{0.6314}&0.5121&0.5314&0.6162\\
&AnomalyDAE    &0.9171&0.6027&0.5280&0.6161&0.6047&0.5272&0.6326\\
&OCGNN   &0.8810&0.5742&0.5622&0.6155&0.4803&0.6461&0.6265\\
&AEGIS     &0.7593&0.6728&0.5605&0.6022&0.4451&0.5936&0.6055\\
&GAAN    &0.6531&0.3636&0.5349&0.5605&0.3785&0.4355&0.4876\\
&TAM    &0.8405&0.5923&0.5829&0.6075&0.4847&0.6013&0.6182\\
&GADNR &0.6436&0.5943&0.5978&0.4554&\underline{0.6406}&0.6158&0.5913\\
&ADA-GAD &0.2034&0.4416&0.5765&0.4400&0.5369&0.4067&0.4342\\
% &RHO  &0.9302&\textbf{0.8623}&0.6207&0.0000&\underline{0.6255}&\underline{0.7129}&0.4877\\
&GGAD   &\underline{0.9443}&0.8228&\underline{0.6354}&0.6180&0.5340&0.6476&0.7003\\
&RHO &0.9302&\textbf{0.8623}&0.6207&0.5914&0.6255&\underline{0.7129}&\underline{0.7238}\\
&GraphNC   & \textbf{0.9613} & \underline{0.8340} & \textbf{0.6420} & \textbf{0.6630} & \textbf{0.6505} & \textbf{0.7693} & \textbf{0.7533} \\
\hline
\hline
\multirow{10}{*}{AUPRC}
&DOMINANT &0.7289&0.0542&0.0414&0.2237&0.2217&0.1283&0.2330\\
&AnomalyDAE    &0.7748&0.0538&0.0362&0.2064&0.2697&0.1177&0.2431\\
&OCGNN   &0.6895&0.0492&0.0400&0.1937&0.2138&0.1501&0.2227\\

&AEGIS     &0.2616&0.0685&0.0441&0.2192&0.1943&0.1110&0.1497\\
&GAAN    &0.0856&0.0324&0.0362&0.1724&0.1693&0.0768&0.0945\\
&TAM    &0.5183&0.0551&0.0446&0.2184&0.2178&0.1087&0.1938\\
&GADNR &0.0866&0.0523&0.0456&0.1324&\textbf{0.3511}&0.1430&0.1352\\
& ADA-GAD &0.0388&0.0375&0.0417&0.1295&0.2566&0.0723&0.0961\\
% &RHO  &0.7879&\textbf{0.4893}&\textbf{0.0616}&0.0000&\underline{0.3256}&\underline{0.2337}&0.4877\\

&GGAD   &\underline{0.7922}&0.1825&0.0520&\underline{0.2261}&0.2502&0.1442&0.2745\\
&RHO &0.7879&\textbf{0.4893}&\textbf{0.0616}&0.1918&\underline{0.3256}&\underline{0.2337}&\underline{0.3483}\\
&GraphNC   & \textbf{0.8403} & \underline{0.3667} & \underline{0.0560} & \textbf{0.2389} & 0.3082 & \textbf{0.3561} & \textbf{0.3610} \\
\hline 
\hline
\end{tabular}
}
\label{tab:mainR}
\end{center}
\end{table*}

\section{Experiments}
\noindent \textbf{Datasets.}
We evaluate the effectiveness of the GraphNC across six real-world GAD datasets drawn from diverse domains, including social networks: Reddit \citep{kumar2019predicting}, online shopping co-review networks: Amazon \citep{dou2020enhancing} and YelpChi \citep{dou2020enhancing}, a co-purchase network: Photo \citep{shchur2018pitfalls}, a collaboration network: Tolokers \citep{mcauley2015image}, and financial networks: T-Finance \citep{tang2022rethinking}. Additional description and statistical results are provided in App. \ref{app:dataset}. 

\noindent \textbf{Competing Methods.}
The competing methods include reconstruction-based methods: DOMINANT \citep{ding2019deep}, GADNR \citep{roy2024gad}, ADA-GAD \citep{he2024ada} and AnomalyDAE \citep{fan2020anomalydae}, one-class classification: OCGNN \citep{wang2021one} and RHO \citep{ai2026semi}, adversarial learning based methods: AEGIS \citep{ding2021inductive} and GAAN \citep{chen2020generative},  affinity maximization-based methods: TAM \citep{hezhe2023truncated}, and anomaly generation-based method: GGAD \citep{qiao2024generative}. A more detailed description of competing methods can be found in App. \ref{app:competing_methods}. Except  GGAD and RHO that are specifically designed for the semi-supervised setting, the rest of competing methods are primarily unsupervised methods, which are adapted to our semi-supervised setting following \citep{qiao2024generative}: reconstruction is performed using only the normal nodes, the one-class center calculation is based on only the normal nodes, the adversarial discriminator is trained with the normal nodes, and the affinity maximization is also applied on the normal nodes solely. Note that some unsupervised methods, such as CoLA \citep{liu2021anomaly}, GRADATE \citep{duan2023graph}, and HUGE \citep{pan2025label}, cannot be adapted to semi-supervised setting due to their design more relying on fully unlabeled data, which are excluded from our comparison.

\noindent \textbf{Evaluation Metric.}
Following \citep{liu2021anomaly, qiao2024generative, ai2026semi}, two widely-used metrics, AUROC and AUPRC, are used to evaluate the performance of all methods. For both metrics, a higher
value denotes a better performance. Moreover, for each method, we report the average performance after 5 independent runs with different random seeds.

\noindent \textbf{Implementation Details.} \label{Sec:Implementation}
Our GraphNC model is implemented in Pytorch 1.10.0 with Python 3.8, and
all the experiments are performed using GeForce RTX 3090 (24GB). GraphNC is optimized using the Adam optimizer \citep{kinga2015method} with a learning rate of $5e-3$ for Photo and Reddit that have relatively high attribute dimensions, and $5e-4$ for Amazon, T-Finance, YelpChi, and Tolokers that have relatively low attribute dimensions, to avoid overfitting. The pre-trained teacher is set to GGAD \citep{qiao2024generative} by default, the experimental results of other teacher models can be found in  Sec. \ref{sec:teacher}. $\alpha$ that controls the importance of NormReg is set to $0.01$ and the mask ratio for the augment mechanism is set $0.30$  by default.

% \begin{table}[t]
% \caption{Results of GraphNC using different teacher models. (`*') denotes the teacher model used.}
% \label{gen_auroc}
% \begin{center}
% \resizebox{1.0\columnwidth}{!}{
% \begin{tabular}{c|c|cccc}
% \hline
% \hline
% \multirow{2}*{\textbf{Metric}}&\multirow{2}*{\textbf{Method}} & \multicolumn{4}{c}{\textbf{Dataset}}\\
% &&Amazon &T-Finance &YelpChi &Tolokers\\
% \hline
% \multirow{6}{*}{AUROC}
% &DOMINANT &0.8867&0.6167&0.6314&0.5121\\
% &GraphNC (DOMINANT) &\textbf{0.8936}&\textbf{0.7985}&\textbf{0.6543}&\textbf{0.6977}\\
% \cline{2-6}
% &OCGNN   &0.8810&0.5742&0.6155&0.4803\\
% &GraphNC (OCGNN)   &\textbf{0.9241}&\textbf{0.6819}&\textbf{0.6447}&\textbf{0.6069}\\
% \cline{2-6}
% &GGAD   &0.9443&0.8228&0.6180&0.5340\\
% &GraphNC (GGAD)   & \textbf{0.9613} & \textbf{0.8340} & \textbf{0.6630} & \textbf{0.6505} \\
% \hline
% \hline
% \multirow{6}{*}{AUPRC}
% &DOMINANT &0.7289&0.0542&0.2237&0.2217\\
% &GraphNC (DOMINANT)  &\textbf{0.7290}&\textbf{0.1703}&\textbf{0.2375}&\textbf{0.3465}\\
% \cline{2-6}
% &OCGNN   &0.6895&0.0492&0.1937&0.2138\\
% &GraphNC (OCGNN)   &\textbf{0.7372}&\textbf{0.0748}&\textbf{0.2116}&\textbf{0.2814}\\
% \cline{2-6}
% &GGAD   &0.7922&0.1825&0.2261&0.2502\\
% &GraphNC (GGAD)   & \textbf{0.8403} & \textbf{0.3667} & \textbf{0.2389} & \textbf{0.3082} \\
% \hline 
% \hline
% \end{tabular}
% }
% \label{tab:three}
% \end{center}
% \end{table}
\begin{table*}[t]
\caption{Results of GraphNC using different teacher models. `(\texttt{MODEL NAME})' denotes the teacher model used.}
\label{gen_auroc}
\begin{center}
\resizebox{0.95\textwidth}{!}{
\begin{tabular}{c|c|cccccc|c}
\hline
\hline
\multirow{2}*{\textbf{Metric}}&\multirow{2}*{\textbf{Method}} & \multicolumn{6}{c|}{\textbf{Dataset}}\\
&&Amazon &T-Finance &Reddit &YelpChi &Tolokers &Photo &Avg.\\
\hline
\multirow{6}{*}{AUROC}
&DOMINANT &0.8867&0.6167&0.5194&0.6314&0.5121&0.5314&0.6162\\
&GraphNC (DOMINANT) &\textbf{0.8936}&\textbf{0.7985}&\textbf{0.5260}&\textbf{0.6543}&\textbf{0.6977}&\textbf{0.6829}&\textbf{0.7088}\\
\cline{2-9}
&OCGNN   &0.8810&0.5742&0.5622&0.6155&0.4803&0.6461&0.6265\\
&GraphNC (OCGNN)   &\textbf{0.9241}&\textbf{0.6819}&\textbf{0.6093}&\textbf{0.6447}&\textbf{0.6069}&\textbf{0.7068}&\textbf{0.6956}\\
\cline{2-9}
&GGAD   &0.9443&0.8228&0.6354&0.6180&0.5340&0.6476&0.7003\\
&GraphNC (GGAD)   & \textbf{0.9613} & \textbf{0.8340} & \textbf{0.6420} & \textbf{0.6630} & \textbf{0.6505} & \textbf{0.7693} & \textbf{0.7533} \\
\hline
\hline
\multirow{6}{*}{AUPRC}
&DOMINANT &0.7289&0.0542&\textbf{0.0414}&0.2237&0.2217&0.1283&0.2330\\
&GraphNC (DOMINANT)  &\textbf{0.7290}&\textbf{0.1703}&\textbf{0.0414}&\textbf{0.2375}&\textbf{0.3465}&\textbf{0.1526}&\textbf{0.2795}\\
\cline{2-9}
&OCGNN   &0.6895&0.0492&0.0400&0.1937&0.2138&0.1501&0.2227\\
&GraphNC (OCGNN)   &\textbf{0.7372}&\textbf{0.0748}&\textbf{0.0512}&\textbf{0.2116}&\textbf{0.2814}&\textbf{0.1823}&\textbf{0.2564}\\
\cline{2-9}
&GGAD   &0.7922&0.1825&0.0520&0.2261&0.2502&0.1442&0.2745\\
&GraphNC (GGAD)   & \textbf{0.8403} & \textbf{0.3667} & \textbf{0.0560} & \textbf{0.2389} & \textbf{0.3082} & \textbf{0.3561} & \textbf{0.3610} \\
\hline 
\hline
\end{tabular}
}
\label{tab:learn}
\end{center}
\end{table*}

\begin{table}[t]
\caption{AUROC and AUPRC results of GraphNC and its variants.} 
\label{albations}
\begin{center}
\resizebox{1.04\columnwidth}{!}{
\begin{tabular}{l|ccc}
\toprule
\multirow{3}*{\textbf{Method}}& \multicolumn{3}{c}{\textbf{Dataset}} \\
&  Amazon & T-Finance & Tolokers\\
\midrule
& \multicolumn{3}{c}{\textbf{AUROC}}\\
\cline{2-4}
OT &0.9443&0.8228&0.5340\\
OT+NormReg  & 0.8956 & \textbf{0.8455} & 0.5843 \\
OT+NormReg-Finetune  & 0.9060 & 0.8008 & 0.5396 \\
OT+ScoreDA  & \underline{0.9511} & 0.8300 & 0.5811 \\
OT+ScoreDA+NormReg*   & 0.9452& 0.8278 & \underline{0.6492} \\
OT+ScoreDA+NormReg  & \textbf{0.9613} & \underline{0.8340} & \textbf{0.6505} \\
\midrule
& \multicolumn{3}{c}{\textbf{AUPRC}}\\
\cline{2-4}
OT &0.7922&0.1825&0.2502\\
OT+NormReg & 0.6568 & 0.2413 & 0.2688 \\
OT+NormReg-Finetune  & 0.7008 & 0.1406 & 0.2465 \\
OT+ScoreDA  & \underline{0.8189} & \underline{0.2566} & 0.2567 \\
OT+ScoreDA+NormReg*  & 0.8180 & 0.1971 & \underline{0.3034} \\
OT+ScoreDA+NormReg  & \textbf{0.8403} & \textbf{0.3667} & \textbf{0.3082} \\
\bottomrule
\end{tabular}
}
\vspace{-1em}
\label{tab:Ab}
\end{center}
\end{table}

\subsection{Main Experiments}
The main comparison results are shown in Table \ref{tab:mainR}, where all models use 15\% labeled normal nodes during training.  From the results, we find that GGAD surpasses adapted reconstruction-based, one-class classification, and adversarial learning approaches, suggesting that the classifier based on anomaly generation is particularly effective for learning the normality in semi-supervised GAD. 
By learning adaptive spectral filters, RHO mitigates the limited generalization of normal nodes under varying homophily levels, thereby achieving the second-best performance.
Despite building upon GGAD, GraphNC consistently outperforms its pretrained teacher, GGAD, and the best competing method, RHO, having an average 4.58\% AUROC and 7.35\% AUPRC improvement over RHO. These experimental results indicate that the two components in GraphNC can effectively calibrate the normality learned from GGAD by aligning its scores with the output of GGAD, while at the same time refining itself through representation regularization to mitigate inaccurate anomaly scores from GGAD. Besides, although the reconstruction-based, one-class classification-based, and affinity maximized methods achieve competitive performance on Amazon, T-Finance, and YelpChi, they still largely underperform GraphNC. This is primarily because their modeling of normal patterns is less effective than that of the anomaly-generation-based method GGAD.

\subsection{GraphNC Enabled Teacher Models} \label{sec:teacher}
To demonstrate that GraphNC is flexible and can be plugged into various teacher models, 
% not limited to the aforementioned anomaly-generation-based classification methods, 
in addition to the default teacher model GGAD, 
we further evaluate the performance GraphNC when plugging the reconstruction-based method DOMINANT and the one-class classification-based method OCGNN as a pre-trained teacher model into our model. The AUROC and AUPRC results are shown in Table \ref{tab:learn}. It is clear from the results that, similar to the results using GGAD as the teacher model, GraphNC can substantially and consistently improve the performance of DOMINANT and OCGNN in terms of both AUROC and AUPRC. In particular, on Tolokers and Photo, both teacher models struggle to deliver strong performance, whereas applying GraphNC leads to substantial gains in both AUROC and AUPRC, resulting in a 36.2\% and 26.3\%  AUROC improvement on Tolokers for DOMINANT and OCGNN, respectively. The main reason is that aligning the score distribution from the pre-trained teacher model in GraphNC enables the student to calibrate the coarse normality learned by the teacher. This is because that GraphNC can not only inherit the strengths of the teacher but also correct its inaccuracies and refines normality on its own, resulting in consistent performance improvements. Importantly, given a better teacher model, GraphNC can obtain better GAD performance, \eg, GraphNC (GGAD) vs. GraphNC (DOMINANT) and GraphNC (OCGNN).

\begin{table*}[ht]
\centering
\caption{AUROC and AUPRC results of GraphNC and its variants using different teacher models.}
\label{tab:ablation_all}
\resizebox{0.95\textwidth}{!}{
\begin{tabular}{l|l| cccccc|c}
\toprule
\toprule
Metric & \ \ \ \ \ \ \ \ Method & Amazon & T\text{-}Finance & Reddit & YelpChi & Tolokers & Photo & Avg. \\
\midrule
\multirow{13}{*}[-0.4ex]{AUROC}
&\multicolumn{8}{c}{DOMINANT}\\
\cmidrule(l{2pt}r{2pt}){2-9}
& OT+ScoreDA+NormReg       & \textbf{0.8936} & \textbf{0.7985} & \underline{0.5260} & \textbf{0.6543} & \textbf{0.6977} & \underline{0.6829} & \textbf{0.7088} \\
& OT+ScoreDA+NormReg* & 0.8674 & 0.4301 & \underline{0.5260} & 0.5382 & \underline{0.6895} & \textbf{0.6902} & \underline{0.6236} \\
& OT+ScoreDA   & 0.7103 & \underline{0.7069} & 0.4623 & 0.5125 & 0.5342 & 0.6384 & 0.5941 \\
& OT+NormReg-Finetune    & 0.6054 & 0.4920 & 0.5192 & 0.6299 & 0.6258 & 0.5188 & 0.5651 \\
& OT+NormReg & 0.7320 & 0.5041 & \textbf{0.5968} & \underline{0.6315} & 0.5943 & 0.5128 & 0.5952 \\
& OT      & \underline{0.8867} & 0.6167 & 0.5194 & 0.6314 & 0.5121 & 0.5314 & 0.6162 \\
\cmidrule(l{2pt}r{2pt}){2-9}
&\multicolumn{8}{c}{OCGNN}\\
\cmidrule(l{2pt}r{2pt}){2-9}
& OT+ScoreDA+NormReg          & \underline{0.9255} & \textbf{0.6819} & \textbf{0.6093} & \textbf{0.6447} & 0.6069 & \textbf{0.7068} & \textbf{0.6956} \\
& OT+ScoreDA+NormReg* & \textbf{0.9276} & 0.4101 & 0.5507 & 0.6397 & \textbf{0.6321} & 0.6462 & 0.6344 \\
& OT+ScoreDA & 0.9158 & 0.4101 & 0.5235 & \underline{0.6411} & \underline{0.6134} & \underline{0.6913} & 0.6325 \\
& OT+NormReg-Finetune          & 0.8274 & 0.5960 & 0.4939 & 0.4738 & 0.5309 & 0.5588 & 0.5801 \\
& OT+NormReg  & 0.9095 & \underline{0.6113} & \underline{0.5820} & 0.6248 & 0.5245 & 0.6478 & \underline{0.6499} \\
& OT               & 0.8810 & 0.5742 & 0.5622 & 0.6155 & 0.4803 & 0.6461 & 0.6265 \\
\midrule
\multirow{13}{*}[-0.4ex]{AUPRC}
&\multicolumn{8}{c}{DOMINANT}\\
\cmidrule(l{2pt}r{2pt}){2-9}
& OT+ScoreDA+NormReg       & \textbf{0.7290} & \textbf{0.3703} & 0.0414 & \textbf{0.2375} & \underline{0.3465} & \underline{0.1526} & \textbf{0.2795} \\
& OT+ScoreDA+NormReg* & 0.5398 & 0.0363 & \underline{0.0415} & 0.1630 & \textbf{0.3468} & \textbf{0.1554} & 0.2138 \\
& OT+ScoreDA   & 0.1466 & \underline{0.1759} & 0.0312 & 0.1581 & 0.2445 & 0.1274 & 0.1472 \\
& OT+NormReg-Finetune          & 0.0750 & 0.0415 & 0.0330 & 0.2292 & \underline{0.2793} & 0.1122 & 0.1283 \\
& OT+NormReg  & 0.1094 & 0.0424 & \textbf{0.0486} & \underline{0.2307} & 0.2445 & 0.1066 & 0.1303 \\
& OT             & \underline{0.7289} & 0.0542 & 0.0414 & 0.2237 & 0.2217 & 0.1283 & \underline{0.2330} \\
\cmidrule(l{2pt}r{2pt}){2-9}
&\multicolumn{8}{c}{OCGNN}\\
\cmidrule(l{2pt}r{2pt}){2-9}
& OT+ScoreDA+NormReg          & \textbf{0.7372} & \textbf{0.0748} & \textbf{0.0512} & \textbf{0.2116} & 0.2814 & \textbf{0.1823} & \textbf{0.2564} \\
& OT+ScoreDA+NormReg* & 0.6106 & 0.0350 & 0.0358 & 0.2048 & \underline{0.2983} & 0.1281 & 0.2188 \\
& OT+ScoreDA   & 0.6298 & 0.0350 & 0.0327 & \underline{0.2112} & \textbf{0.2915} & \underline{0.1782} & 0.2297 \\
& OT+NormReg-Finetune          & 0.4785 & 0.0554 & 0.0326 & 0.1382 & 0.2433 & 0.1093 & 0.1762 \\
& OT+NormReg  & \underline{0.7015} & \underline{0.0567} & \underline{0.0444} & 0.2036 & 0.2415 & 0.1525 & \underline{0.2333} \\
& OT               & 0.6895& 0.0492 & 0.0400 & 0.1937 & 0.2138 & 0.1501 & 0.2227 \\
\bottomrule
\bottomrule
\end{tabular}}
\end{table*}

\subsection{Ablation Study with the Default Teacher Model} \label{app:ablation_s}
Ablation study is perform to evaluate individual components in GraphNC, with GGAD as the default teacher model. To this end, several variants of GraphNC are introduced. (1) Only Teacher (\textbf{OT}) directly uses the pre-trained semi-supervised teacher model to derive the anomaly score. (2) \textbf{OT+NormReg} incorporates NormReg into the pre-training of the teacher model. (3) \textbf{OT+NormReg+Finetune} leverages NormReg to fine-tune the pre-trained teacher. (4) \textbf{OT+ScoreDA} includes the student model and adds ScoreDA on top of OT (\ie, optimizing GraphNC using ScoreDA only). (5) \textbf{OT+ScoreDA+NormReg*} applies NormReg on top of OT+ScoreDA, but it applies the NormReg loss on all nodes instead of the labeled normal nodes only. \textbf{OT+ScoreDA+NormReg} is the default full model using both ScoreDA and NormReg. 

The experimental results are shown in Table \ref{tab:Ab}. The ablation study experiments on all datasets are provided in App. \ref{app:ablation_all_data}. We observe that OT+ScoreDA enhances the performance of OT, achieving the second-best results on most datasets in terms of AUPRC. This demonstrates that the ScoreDA component can effectively calibrate the normality learned from the teacher, ensuring better separation 
between normal and abnormal classes. However, OT+ScoreDA underperforms the full model (the last row) on all datasets, highlighting that the negative impact of inaccurate anomaly scores from the teacher model can be effectively mitigated by the NormReg component. On the other hand, applying NormReg directly to the teacher via either joint optimization (OT+NormReg) or finetuning (OT+NormReg+Finetune) can improve OT to some extent. This justifies the effectiveness of NormReg from a different perspective, but NormReg works best when working in the student model and combining with ScoreDA due to its joint effects in minimizing the detection errors discussed in our theoretical analysis. 
% This is mainly because the majority of samples are normal nodes, and applying NormReg can further enhance the performance of the teacher model. However, since all datasets contain some anomalies, this leads to suboptimal performance compared to GraphNC.

% Besides, we also conduct the ablation study on other teacher models to further analyze the effectiveness of each module in GraphNC.  
Meanwhile, we evaluate the performance of GraphNC when the teacher performs poorly, the average performance improvement of GraphNC w.r.t. different accuracy levels, and the comparison of the number of true anomalous instances that are accurately detected for GGAD and GraphNC. All the results can be found in App. \ref{app:Additional}.

\begin{figure}[t]
  \centering
  \includegraphics[width=0.9\columnwidth]{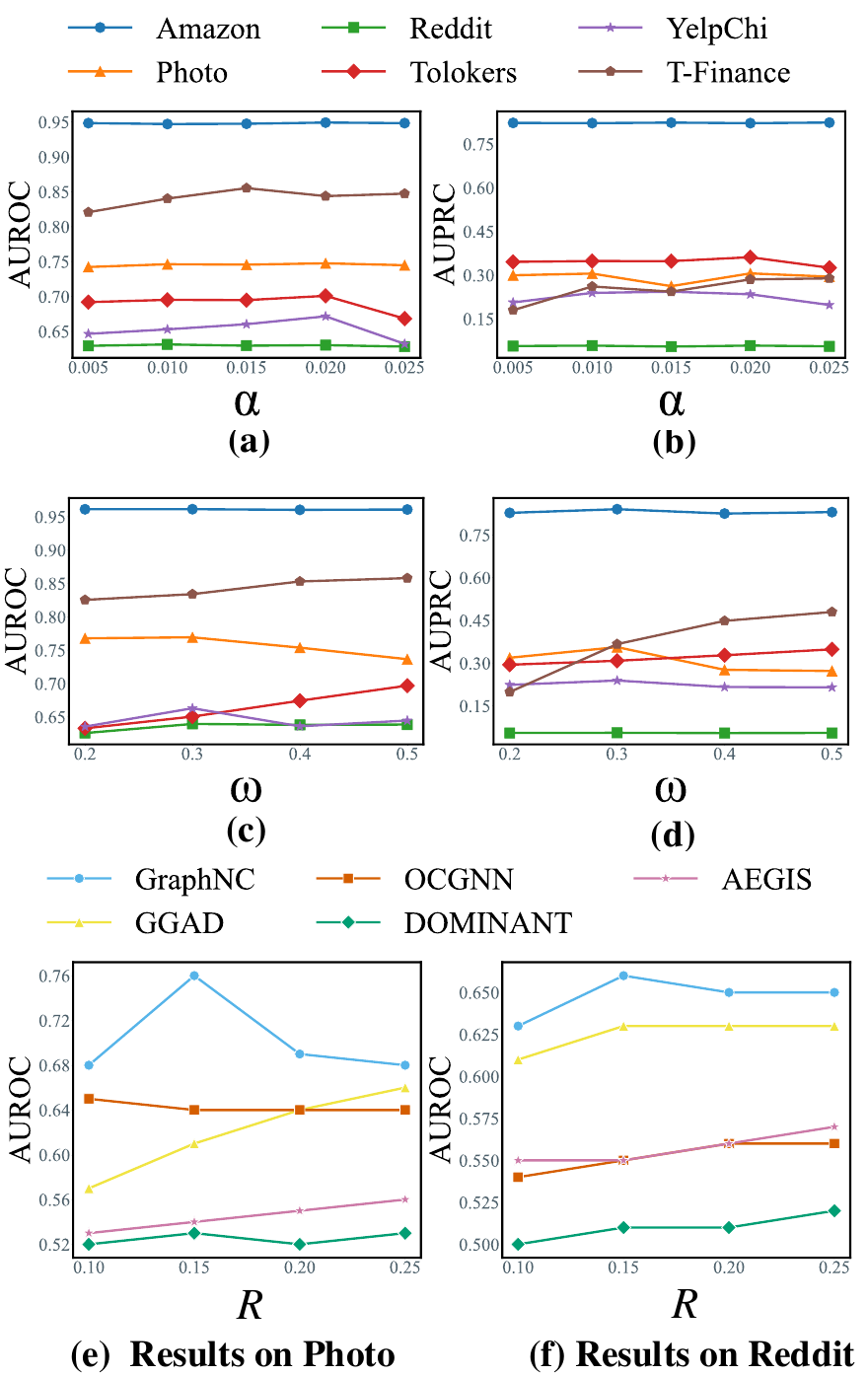}
  \caption{(a-d) AUROC and AURPC results w.r.t $\alpha$ and $\omega$. (e-f) AUROC results w.r.t $R$ on Photo and Reddit.}
  \label{fig:sensitive}
\end{figure}

\subsection{Ablation Study with Other Teacher Models} \label{app:ablation_all}
We also conduct the ablation study of the GraphNC-enabled framework on two other types of teacher models to demonstrate the effectiveness of each module in the GraphNC. We evaluate either the reconstruction-based method, DOMINANT, or the one-class classification-based method, OCGNN, as the teacher model and make a comparison with the variants of GraphNC. 

The experimental results are shown in Table \ref{tab:ablation_all}.
We observe that, similar to the results in Table \ref{tab:Ab}, GraphNC outperforms the corresponding variants on most of the datasets for both DOMINANT and OCGNN. This further demonstrates the flexibility and effectiveness of GraphNC in enhancing existing semi-supervised methods.

\subsection{Hyperparameter Sensitivity Analysis}
We evaluate the sensitivity of GraphNC w.r.t. the loss moderator $\alpha$,  mask ratio $\omega$, and training size $R$.
Detailed analyses of the effects of teachers' performance, the augmented approach, threshold selection for FP/FN computation,  time complexity, and running time comparison are provided in App. \ref{app:Additional} and \ref{app:time_complexity}.

\noindent \textbf{Performance w.r.t. the Loss Moderator $\alpha$.} 
As shown in Figs. \ref{fig:sensitive} (a-b), our model GraphNC remains generally stable as the consistency regularization weight
$\alpha$ varies. However, on Tolokers and YelpChi, the performance 
slightly declines as $\alpha$ increases. This is mainly because excessive emphasis on the consistency may lead to an overly compressed representation space, which may in turn weaken the representation discriminability.

% \begin{wrapfigure}{r}{0.36\textwidth}
%     \centering
%     \includegraphics[width=\linewidth]{size.pdf}
%     \caption{AUROC results w.r.t $R$.
%     }
%     \label{fig:train_size}
%     \vspace{-1em}
% \end{wrapfigure}

\noindent \textbf{Performance w.r.t. the Mask Ratio $\omega$.} We evaluate GraphNC under different masking ratios. As shown in Figs. \ref{fig:sensitive} (c-d), we observe that increasing the masking ratio $\omega$ generally enhances the performance of
GraphNC on Tolokers and T-Finance, while leading to performance degradation on Photo and YelpChi. This suggests that different datasets benefit from different levels of augmentation, depending on the variation in the underlying normality. For consistency, we set $\omega$ to $0.3$ as the default value across all datasets.

\noindent \textbf{Performance w.r.t. the Training Size $R$. }
To explore the impact of training size $R$, we compare GraphNC with four semi-supervised GAD methods using varying numbers of training normal nodes, as shown in Figs.\ref{fig:sensitive} (e-f). Similar with other competing methods, with the increase of training size $R$, our model GraphNC maintains the superiority over other competing methods, demonstrating the effectiveness of GraphNC using different scales of labeled training data. 

% Meanwhile, we also evaluated the full performance distribution by varying the proportion of training normal samples from 3\% to 30\%, see in App. \ref{app:trainSize}. 

\section{Conclusion}
In this paper, we propose GraphNC, a generic graph normality calibration framework designed to enhance existing semi-supervised GAD methods, achieving substantial and consistent performance improvements across different types of existing approaches. GraphNC is composed of two key components: ScoreDA and NormReg. The ScoreDA module aligns the anomaly scores with the anomaly score distribution produced by the teacher model, effectively pushing the anomaly scores for the normal and abnormal classes toward opposite ends. This alignment ensures a clearer separation between normal and abnormal nodes in the score space, thereby leading to a lower FPR and FNR.
The NormReg module is introduced to reduce the impact of inaccurate scores in the anomaly score distribution served by the teacher, enabling GraphNC to learn more compact normal representations.  
% GraphNC is flexible and can be plugged into various pre-trained teacher models. 
GraphNC is comprehensively verified by empirical results on six GAD datasets, which show that it consistently enhances performance of three different types of teacher models, and it achieves the best performance.

% . Moreover, it is demonstrated that state-of-the-art performance is achieved when applying GraphNC to the strongest competing methods.

\noindent \textbf{Limitation and Future Work.}
% A potential limitation of GraphNC is that it assumes the pre-trained teacher model provides sufficiently informative supervision signals for the ScoreDA component. That said, 
Our empirical results suggest that GraphNC can still yield consistent improvements even when the teacher performs relatively poorly, because of the role of ScoreDA and NormReg. Nevertheless, the effectiveness of ScoreDA still depends on the teacher being able to offer at least coarse supervisory information. Future work will investigate additional forms of supplementary supervision and some expert knowledge to further reduce the framework’s dependence on the teacher model.

\section*{Acknowledgments}
This research is supported in part by the National Research Foundation, Singapore and CyberSG R\&D Programme Office under its Translation and Innovation Grants (CRPO-GC4-SMU-001), the Lee Kong Chian Fellowship (T050273), A*STAR under its MTC YIRG Grant (M24N8c0103), and the Singapore Ministry of Education (MOE) Academic Research Fund (AcRF) Tier 1 Grant (24-SIS-SMU-008).

% other sources, such as some labeled anomaly data or expert knowledge, as 
% \section*{Reproducibility Statement}
% We have taken several steps to ensure the reproducibility of our work. 
% The datasets used in our experiments are all publicly available, and the construction and analysis of the data are presented in detail in App.~\ref{app:dataset}. 
% The implementation details, including model configurations and training hyperparameters, are thoroughly documented in Sec.~\ref{Sec:Implementation}. 
% The source code has been submitted as supplementary materials.

% In the unusual situation where you want a paper to appear in the
% references without citing it in the main text, use \nocite
\nocite{langley00}
\section*{Impact Statement}
This paper presents work whose goal is to advance the field of machine learning. There are many potential societal consequences of our work, none of which we feel must be specifically highlighted here.

\bibliography{example_paper}

@article{wu2024deep,
  title={Deep learning for video anomaly detection: A review},
  author={Wu, Peng and Pan, Chengyu and Yan, Yuting and Pang, Guansong and Yan, Qingsen and Wang, Peng and Zhang, Yanning},
  journal={IEEE Transactions on Neural Networks and Learning Systems},
  year={2026},
  publisher={IEEE}
}

@article{cao2024survey,
  title={A survey on visual anomaly detection: Challenge, approach, and prospect},
  author={Cao, Yunkang and Xu, Xiaohao and Zhang, Jiangning and Cheng, Yuqi and Huang, Xiaonan and Pang, Guansong and Shen, Weiming},
  journal={arXiv preprint arXiv:2401.16402},
  year={2024}
}

@article{yan2025address,
  title={Address Anomalies at Critical Crossroads for Graph Anomaly Detection},
  author={Yan, Junyi and Zuo, Enguang and Liang, Ke and Liu, Meng and Li, Miaomiao and Liu, Xinwang and Lv, Xiaoyi and Lu, Kai},
  journal={IEEE Transactions on Knowledge and Data Engineering},
  year={2025},
  publisher={IEEE}
}

@article{ai2026semi,
  title={Semi-supervised Graph Anomaly Detection via Robust Homophily Learning},
  author={Ai, Guoguo and Qiao, Hezhe and Yan, Hui and Pang, Guansong},
  journal={Advances in Neural Information Processing Systems},
  volume={38},
  pages={133988--134013},
  year={2026}
}

@article{qiao2024generative,
  title={Generative semi-supervised graph anomaly detection},
  author={Qiao, Hezhe and Wen, Qingsong and Li, Xiaoli and Lim, Ee-Peng and Pang, Guansong},
  journal={Advances in neural information processing systems},
  volume={37},
  pages={4660--4688},
  year={2024}
}

@article{qiao2025deep,
  title={Deep graph anomaly detection: A survey and new perspectives},
  author={Qiao, Hezhe and Tong, Hanghang and An, Bo and King, Irwin and Aggarwal, Charu and Pang, Guansong},
  journal={IEEE Transactions on Knowledge and Data Engineering},
  year={2025},
  publisher={IEEE}
}

@inproceedings{pan2025label,
  title={A Label-Free Heterophily-Guided Approach for Unsupervised Graph Fraud Detection},
  author={Pan, Junjun and Liu, Yixin and Zheng, Xin and Zheng, Yizhen and Liew, Alan Wee-Chung and Li, Fuyi and Pan, Shirui},
  booktitle={Proceedings of the AAAI Conference on Artificial Intelligence},
  volume={39},
  number={12},
  pages={12443--12451},
  year={2025}
}

@inproceedings{duan2023graph,
  title={Graph anomaly detection via multi-scale contrastive learning networks with augmented view},
  author={Duan, Jingcan and Wang, Siwei and Zhang, Pei and Zhu, En and Hu, Jingtao and Jin, Hu and Liu, Yue and Dong, Zhibin},
  booktitle={Proceedings of the AAAI conference on artificial intelligence},
  volume={37},
  number={6},
  pages={7459--7467},
  year={2023}
}

@inproceedings{wang2023open,
  title={Open-Set Graph Anomaly Detection via Normal Structure Regularisation},
  author={Wang, Qizhou and Pang, Guansong and Salehi, Mahsa and Buntine, Wray and Leckie, Christopher},
  booktitle={ICLR},
  year={2025}
}

@article{liu2021anomaly,
  title={Anomaly detection on attributed networks via contrastive self-supervised learning},
  author={Liu, Yixin and Li, Zhao and Pan, Shirui and Gong, Chen and Zhou, Chuan and Karypis, George},
  journal={IEEE transactions on neural networks and learning systems},
  volume={33},
  number={6},
  pages={2378--2392},
  year={2021},
  publisher={IEEE}
}

@article{shchur2018pitfalls,
  title={Pitfalls of graph neural network evaluation},
  author={Shchur, Oleksandr and Mumme, Maximilian and Bojchevski, Aleksandar and G{\"u}nnemann, Stephan},
  journal={arXiv preprint arXiv:1811.05868},
  year={2018}
}

@article{laine2016temporal,
  title={Temporal ensembling for semi-supervised learning},
  author={Laine, Samuli and Aila, Timo},
  journal={arXiv preprint arXiv:1610.02242},
  year={2016}
}

@article{mo2024revisiting,
  title={Revisiting self-supervised heterogeneous graph learning from spectral clustering perspective},
  author={Mo, Yujie and Lu, Zhihe and Yu, Runpeng and Zhu, Xiaofeng and Wang, Xinchao},
  journal={Advances in Neural Information Processing Systems},
  volume={37},
  pages={43133--43163},
  year={2024}
}

@inproceedings{huang2024interlude,
  title={InterLUDE: interactions between labeled and unlabeled data to enhance semi-supervised learning},
  author={Huang, Zhe and Yu, Xiaowei and Zhu, Dajiang and Hughes, Michael C},
  booktitle={Proceedings of the 41st International Conference on Machine Learning},
  pages={20452--20473},
  year={2024}
}

@inproceedings{mai2024rankmatch,
  title={Rankmatch: Exploring the better consistency regularization for semi-supervised semantic segmentation},
  author={Mai, Huayu and Sun, Rui and Zhang, Tianzhu and Wu, Feng},
  booktitle={Proceedings of the IEEE/CVF conference on computer vision and pattern recognition},
  pages={3391--3401},
  year={2024}
}

@article{gui2024survey,
  title={A survey on self-supervised learning: Algorithms, applications, and future trends},
  author={Gui, Jie and Chen, Tuo and Zhang, Jing and Cao, Qiong and Sun, Zhenan and Luo, Hao and Tao, Dacheng},
  journal={IEEE Transactions on Pattern Analysis and Machine Intelligence},
  volume={46},
  number={12},
  pages={9052--9071},
  year={2024},
  publisher={IEEE}
}

@inproceedings{pham2021meta,
  title={Meta pseudo labels},
  author={Pham, Hieu and Dai, Zihang and Xie, Qizhe and Le, Quoc V},
  booktitle={Proceedings of the IEEE/CVF conference on computer vision and pattern recognition},
  pages={11557--11568},
  year={2021}
}

@article{xu2023rosas,
  title={RoSAS: Deep semi-supervised anomaly detection with contamination-resilient continuous supervision},
  author={Xu, Hongzuo and Wang, Yijie and Pang, Guansong and Jian, Songlei and Liu, Ning and Wang, Yongjun},
  journal={Information Processing \& Management},
  volume={60},
  number={5},
  pages={103459},
  year={2023},
  publisher={Elsevier}
}

@inproceedings{pang2021toward,
  title={Toward deep supervised anomaly detection: Reinforcement learning from partially labeled anomaly data},
  author={Pang, Guansong and Van Den Hengel, Anton and Shen, Chunhua and Cao, Longbing},
  booktitle={Proceedings of the 27th ACM SIGKDD conference on knowledge discovery \& data mining},
  pages={1298--1308},
  year={2021}
}

@inproceedings{pang2023deep,
  title={Deep weakly-supervised anomaly detection},
  author={Pang, Guansong and Shen, Chunhua and Jin, Huidong and Van Den Hengel, Anton},
  booktitle={Proceedings of the 29th ACM SIGKDD Conference on Knowledge Discovery and Data Mining},
  pages={1795--1807},
  year={2023}
}

@inproceedings{pang2019deep,
  title={Deep anomaly detection with deviation networks},
  author={Pang, Guansong and Shen, Chunhua and Van Den Hengel, Anton},
  booktitle={Proceedings of the 25th ACM SIGKDD international conference on knowledge discovery \& data mining},
  pages={353--362},
  year={2019}
}

@article{ruff2019deep,
  title={Deep semi-supervised anomaly detection},
  author={Ruff, Lukas and Vandermeulen, Robert A and G{\"o}rnitz, Nico and Binder, Alexander and M{\"u}ller, Emmanuel and M{\"u}ller, Klaus-Robert and Kloft, Marius},
  journal={arXiv preprint arXiv:1906.02694},
  year={2019}
}

@article{liu2025survey,
  title={A survey of imbalanced learning on graphs: Problems, techniques, and future directions},
  author={Liu, Zemin and Li, Yuan and Chen, Nan and Wang, Qian and Hooi, Bryan and He, Bingsheng},
  journal={IEEE Transactions on Knowledge and Data Engineering},
  year={2025},
  publisher={IEEE}
}

@article{grandvalet2004semi,
  title={Semi-supervised learning by entropy minimization},
  author={Grandvalet, Yves and Bengio, Yoshua},
  journal={Advances in neural information processing systems},
  volume={17},
  year={2004}
}

@article{kage2024review,
  title={A review of pseudo-labeling for computer vision},
  author={Kage, Patrick and Rothenberger, Jay C and Andreadis, Pavlos and Diochnos, Dimitrios I},
  journal={arXiv preprint arXiv:2408.07221},
  year={2024}
}

@article{chen2023softmatch,
  title={Softmatch: Addressing the quantity-quality trade-off in semi-supervised learning},
  author={Chen, Hao and Tao, Ran and Fan, Yue and Wang, Yidong and Wang, Jindong and Schiele, Bernt and Xie, Xing and Raj, Bhiksha and Savvides, Marios},
  journal={arXiv preprint arXiv:2301.10921},
  year={2023}
}

@article{yang2022survey,
  title={A survey on deep semi-supervised learning},
  author={Yang, Xiangli and Song, Zixing and King, Irwin and Xu, Zenglin},
  journal={IEEE transactions on knowledge and data engineering},
  volume={35},
  number={9},
  pages={8934--8954},
  year={2022},
  publisher={IEEE}
}

@article{chen2022semi,
  title={Semi-supervised and unsupervised deep visual learning: A survey},
  author={Chen, Yanbei and Mancini, Massimiliano and Zhu, Xiatian and Akata, Zeynep},
  journal={IEEE transactions on pattern analysis and machine intelligence},
  volume={46},
  number={3},
  pages={1327--1347},
  year={2022},
  publisher={IEEE}
}

@inproceedings{chen2024consistency,
  title={Consistency training with learnable data augmentation for graph anomaly detection with limited supervision},
  author={Chen, Nan and Liu, Zemin and Hooi, Bryan and He, Bingsheng and Fathony, Rizal and Hu, Jun and Chen, Jia},
  booktitle={The twelfth international conference on learning representations},
  year={2024}
}

@inproceedings{kumar2019predicting,
  title={Predicting dynamic embedding trajectory in temporal interaction networks},
  author={Kumar, Srijan and Zhang, Xikun and Leskovec, Jure},
  booktitle={Proceedings of the 25th ACM SIGKDD international conference on knowledge discovery \& data mining},
  pages={1269--1278},
  year={2019}
}

@article{chen2022gccad,
  title={Gccad: Graph contrastive coding for anomaly detection},
  author={Chen, Bo and Zhang, Jing and Zhang, Xiaokang and Dong, Yuxiao and Song, Jian and Zhang, Peng and Xu, Kaibo and Kharlamov, Evgeny and Tang, Jie},
  journal={IEEE Transactions on Knowledge and Data Engineering},
  volume={35},
  number={8},
  pages={8037--8051},
  year={2022},
  publisher={IEEE}
}

@inproceedings{dou2020enhancing,
  title={Enhancing graph neural network-based fraud detectors against camouflaged fraudsters},
  author={Dou, Yingtong and Liu, Zhiwei and Sun, Li and Deng, Yutong and Peng, Hao and Yu, Philip S},
  booktitle={Proceedings of the 29th ACM international conference on information \& knowledge management},
  pages={315--324},
  year={2020}
}

@inproceedings{ding2019deep,
  title={Deep anomaly detection on attributed networks},
  author={Ding, Kaize and Li, Jundong and Bhanushali, Rohit and Liu, Huan},
  booktitle={Proceedings of the 2019 SIAM international conference on data mining},
  pages={594--602},
  year={2019},
  organization={SIAM}
}

@inproceedings{gao2023addressing,
  title={Addressing heterophily in graph anomaly detection: A perspective of graph spectrum},
  author={Gao, Yuan and Wang, Xiang and He, Xiangnan and Liu, Zhenguang and Feng, Huamin and Zhang, Yongdong},
  booktitle={Proceedings of the ACM Web Conference 2023},
  pages={1528--1538},
  year={2023}
}

@inproceedings{tang2022rethinking,
  title={Rethinking graph neural networks for anomaly detection},
  author={Tang, Jianheng and Li, Jiajin and Gao, Ziqi and Li, Jia},
  booktitle={International Conference on Machine Learning},
  pages={21076--21089},
  year={2022},
  organization={PMLR}
}

@inproceedings{liu2021pick,
  title={Pick and choose: a GNN-based imbalanced learning approach for fraud detection},
  author={Liu, Yang and Ao, Xiang and Qin, Zidi and Chi, Jianfeng and Feng, Jinghua and Yang, Hao and He, Qing},
  booktitle={Proceedings of the web conference 2021},
  pages={3168--3177},
  year={2021}
}

@article{ma2021comprehensive,
  title={A comprehensive survey on graph anomaly detection with deep learning},
  author={Ma, Xiaoxiao and Wu, Jia and Xue, Shan and Yang, Jian and Zhou, Chuan and Sheng, Quan Z and Xiong, Hui and Akoglu, Leman},
  journal={IEEE Transactions on Knowledge and Data Engineering},
  volume={35},
  number={12},
  pages={12012--12038},
  year={2021},
  publisher={IEEE}
}

@inproceedings{hezhe2023truncated,
  title={Truncated Affinity Maximization: One-class Homophily Modeling for Graph Anomaly Detection},
  author={Qiao, Hezhe and Pang, Guansong},
  booktitle={Advances in Neural Information Processing Systems},
  year={2023}
}

@inproceedings{gao2023alleviating,
  title={Alleviating structural distribution shift in graph anomaly detection},
  author={Gao, Yuan and Wang, Xiang and He, Xiangnan and Liu, Zhenguang and Feng, Huamin and Zhang, Yongdong},
  booktitle={Proceedings of the sixteenth ACM international conference on web search and data mining},
  pages={357--365},
  year={2023}
}

@article{pang2021deep,
  title={Deep learning for anomaly detection: A review},
  author={Pang, Guansong and Shen, Chunhua and Cao, Longbing and Hengel, Anton Van Den},
  journal={ACM computing surveys (CSUR)},
  volume={54},
  number={2},
  pages={1--38},
  year={2021},
  publisher={ACM New York, NY, USA}
}

@inproceedings{chen2020generative,
  title={Generative adversarial attributed network anomaly detection},
  author={Chen, Zhenxing and Liu, Bo and Wang, Meiqing and Dai, Peng and Lv, Jun and Bo, Liefeng},
  booktitle={Proceedings of the 29th ACM International Conference on Information \& Knowledge Management},
  pages={1989--1992},
  year={2020}
}

@inproceedings{zhou2022unseen,
  title={Unseen anomaly detection on networks via multi-hypersphere learning},
  author={Zhou, Shuang and Huang, Xiao and Liu, Ninghao and Tan, Qiaoyu and Chung, Fu-Lai},
  booktitle={Proceedings of the 2022 SIAM International Conference on Data Mining (SDM)},
  pages={262--270},
  year={2022},
  organization={SIAM}
}

@article{wang2021one,
  title={One-class graph neural networks for anomaly detection in attributed networks},
  author={Wang, Xuhong and Jin, Baihong and Du, Ying and Cui, Ping and Tan, Yingshui and Yang, Yupu},
  journal={Neural computing and applications},
  volume={33},
  pages={12073--12085},
  year={2021},
  publisher={Springer}
}

@inproceedings{fan2020anomalydae,
  title={Anomalydae: Dual autoencoder for anomaly detection on attributed networks},
  author={Fan, Haoyi and Zhang, Fengbin and Li, Zuoyong},
  booktitle={ICASSP 2020-2020 IEEE International Conference on Acoustics, Speech and Signal Processing (ICASSP)},
  pages={5685--5689},
  year={2020},
  organization={IEEE}
}

@inproceedings{ding2021inductive,
  title={Inductive anomaly detection on attributed networks},
  author={Ding, Kaize and Li, Jundong and Agarwal, Nitin and Liu, Huan},
  booktitle={Proceedings of the twenty-ninth international conference on international joint conferences on artificial intelligence},
  pages={1288--1294},
  year={2021}
}

@inproceedings{mcauley2015image,
  title={Image-based recommendations on styles and substitutes},
  author={McAuley, Julian and Targett, Christopher and Shi, Qinfeng and Van Den Hengel, Anton},
  booktitle={Proceedings of the 38th international ACM SIGIR conference on research and development in information retrieval},
  pages={43--52},
  year={2015}
}

@inproceedings{kinga2015method,
  title={A method for stochastic optimization},
  author={Kinga, D and Adam, Jimmy Ba and others},
  booktitle={International conference on learning representations (ICLR)},
  volume={5},
  pages={6},
  year={2015},
  organization={San Diego, California;}
}

@inproceedings{roy2024gad,
  title={Gad-nr: Graph anomaly detection via neighborhood reconstruction},
  author={Roy, Amit and Shu, Juan and Li, Jia and Yang, Carl and Elshocht, Olivier and Smeets, Jeroen and Li, Pan},
  booktitle={Proceedings of the 17th ACM international conference on web search and data mining},
  pages={576--585},
  year={2024}
}

@inproceedings{he2024ada,
  title={Ada-gad: Anomaly-denoised autoencoders for graph anomaly detection},
  author={He, Junwei and Xu, Qianqian and Jiang, Yangbangyan and Wang, Zitai and Huang, Qingming},
  booktitle={Proceedings of the AAAI Conference on Artificial Intelligence},
  volume={38},
  number={8},
  pages={8481--8489},
  year={2024}
}
\bibliographystyle{icml2026}

%%%%%%%%%%%%%%%%%%%%%%%%%%%%%%%%%%%%%%%%%%%%%%%%%%%%%%%%%%%%%%%%%%%%%%%%%%%%%%%
%%%%%%%%%%%%%%%%%%%%%%%%%%%%%%%%%%%%%%%%%%%%%%%%%%%%%%%%%%%%%%%%%%%%%%%%%%%%%%%
% APPENDIX
%%%%%%%%%%%%%%%%%%%%%%%%%%%%%%%%%%%%%%%%%%%%%%%%%%%%%%%%%%%%%%%%%%%%%%%%%%%%%%%
%%%%%%%%%%%%%%%%%%%%%%%%%%%%%%%%%%%%%%%%%%%%%%%%%%%%%%%%%%%%%%%%%%%%%%%%%%%%%%%
\newpage
\appendix
\onecolumn
\section{Detailed Description of Datasets} \label{app:dataset}
The key statistics of the datasets are presented in Table \ref{datastets-table}. A detailed introduction of these datasets is given as follows.
\begin{itemize}

\item Amazon \citep{dou2020enhancing}: It is a co-review network obtained from the Musical Instrument category on
Amazon.com. There are also three relations: U-P-U (users reviewing at least one same
product), U-S-U (users having at least one same star rating within one week), and U-V-U
(users with top-5\% mutual review similarities).

\item T-Finance \citep{tang2022rethinking}: It is a financial transaction network where the node represents an anonymous
account and the edge represents two accounts that have transaction records. Some attributes
of logging like registration days, logging activities, and interaction frequency, etc, are used
as the features of each account. Users are labeled as anomalies if they fall into categories
such as fraud, money laundering, or online gambling

\item Reddit \citep{kumar2019predicting}: It is a user-subreddit graph which captures one month’s worth of posts shared
across various subreddits at Reddit. The node represents the users, and the text of each post
is transformed into a feature vector and the features of the user and subreddits are the feature
summation of the post they have posted. The anomalies are the used who have been banned
by the platform

\item YelpChi \citep{dou2020enhancing}: Similar to Amazon-all, YelpChi-all incorporates three types of edges: R-U-R (reviews written by the same user), R-S-R (reviews of the same product with identical star ratings), and R-T-R (reviews of the same product posted within the same month). YelpChi-all is then constructed by merging these different relations into a single unified relation \citep{chen2022gccad, hezhe2023truncated}.

\item Tolokers \citep{mcauley2015image}: It is obtained from the Toloka crowdsourcing platform, where the node
represents the person who has participated in the selected project, and the edge represents
two workers work on the same task. The attributes of the node are the profile and task
performance statistics of workers

\item Photo \citep{shchur2018pitfalls}: It is an Amazon co-purchase network where nodes represent products and edges indicate co-purchase relationships. Each node is described by a bag-of-words representation derived from user reviews
\end{itemize}

\section{Description of Baselines} \label{app:competing_methods}
A more detailed introduction of the nine competing GAD models is given as follows.
\begin{itemize}
\item DOMINANT \citep{ding2019deep}: It uses a three-layer GCN to reconstruct both structure and attributes, and the resulting reconstruction error on both structure and attribute is taken as the anomaly score.

\item AnomalyDAE \citep{fan2020anomalydae}:
It incorporates graph attention into the GNN-based structure and attribute encoders, where the anomaly score is derived from the reconstruction error.

\item OCGNN \citep{wang2021one}:  It applies a one-class approach on the GNN, aiming to combine the representational power of the GNN with the anomaly detection capability of one-class classification. The anomaly score is computed by measuring the distance of each point from the center.

\item AEGIS \citep{ding2021inductive}:  It leverages a generative adversarial network (GAN), where the generator is designed to produce pseudo anomalies, while the discriminator distinguishes between genuine normal nodes and the generated anomalies.

\item GAAN  \citep{chen2020generative}: It leverages a generative adversarial network where fake graph nodes are generated. Then the covariance matrix for real nodes and fake nodes are computed to enhace the node. Finally, a discriminator is trained to recognize whether two connected nodes are from a real or fake node.

\item TAM \citep{hezhe2023truncated}: It first reveals the 'one class homophily' and introduces the new anomaly measure score 'local node affinity'. As the local node affinity can not be directly obtained based on the embedding, it is optimized on truncated graphs where non-homophily edges are removed iteratively. The local node affinity is calculated on the learned representations on the truncated graphs.

\item GADNR \citep{roy2024gad}: It employs a GNN-based autoencoder to reconstruct each node’s attributes, degree, and Gaussian-approximated neighbor feature distribution, and uses the weighted sum of these reconstruction losses as the anomaly score.

\item ADA-GAD \citep{he2024ada}: It adopts a two-stage anomaly-denoised autoencoder framework that first pretrains on anomaly-reduced augmented graphs and then retrains decoders on the original graph, taking the final reconstruction error with a distribution regularization term as the anomaly score.

\item GGAD \citep{qiao2024generative}: It employs two priors related to anomalies, asymmetric local affinity and egocentric closeness, to generate pseudo anomaly nodes that can well simulate the real abnormal nodes. Then, a binary one-class classifier is trained based on the existing normal nodes and generated outliers for semi-supervised GAD.

\item RHO \citep{ai2026semi}: It adaptively learns robust homophily patterns for semi-supervised GAD, addressing the diverse homophily levels among normal nodes. Specifically, it employs adaptive frequency-response filters to capture heterogeneous normal patterns from both channel-wise and cross-channel views, and further aligns these representations via graph normality alignment for robust one-class anomaly detection.

\end{itemize}

\section{Additional Experimental Results} \label{app:Additional}

\begin{table}[]
\caption{Key statistics of GAD datasets.}
\centering
\label{datastets-table}
\scalebox{1.1}{
\renewcommand\tabcolsep{3pt}
\renewcommand{\arraystretch}{1.1}
\begin{tabular}{c|cccccc}
\toprule
Datasets & Amazon &T-Finance & Reddit &  YelpChi  &Tolokers   &Photo     \\\hline
\#Nodes & 11,944 &39,357   & 10,984&  45,941&  11,758   &7,484   \\
\#Edges & 4,398,392 & 21,222,543    & 168,016& 3,846,979 & 519,000  & 119,043   \\
\#Attributes  & 25    &10  & 64 & 32 & 10  & 745  \\
Anomaly & 9.5\%   & 4.6\%    & 3.3\%  &14.52\% & 21.8\%  &4.9\%  \\ 
 \bottomrule
\end{tabular}}
\end{table}

\subsection{Score Distribution Visualization on Other Teacher Models}
In this section, we visualized the score distributions of the reconstruction-based method DOMINANT and the one-class classification method OCGNN, along with their corresponding GraphNC-enabled models on Amazon. As shown in the Fig. \ref{fig:example_all1} \& \ref{fig:example_all2}, we observe that the GraphNC-enabled model achieves better separation between normal and abnormal classes, further demonstrating that GraphNC can substantially and consistently enhance performance.

% \begin{figure}
% \centering
%     \subfigure[DOMINANT]{\includegraphics[width=0.30\textwidth]{iclr2026/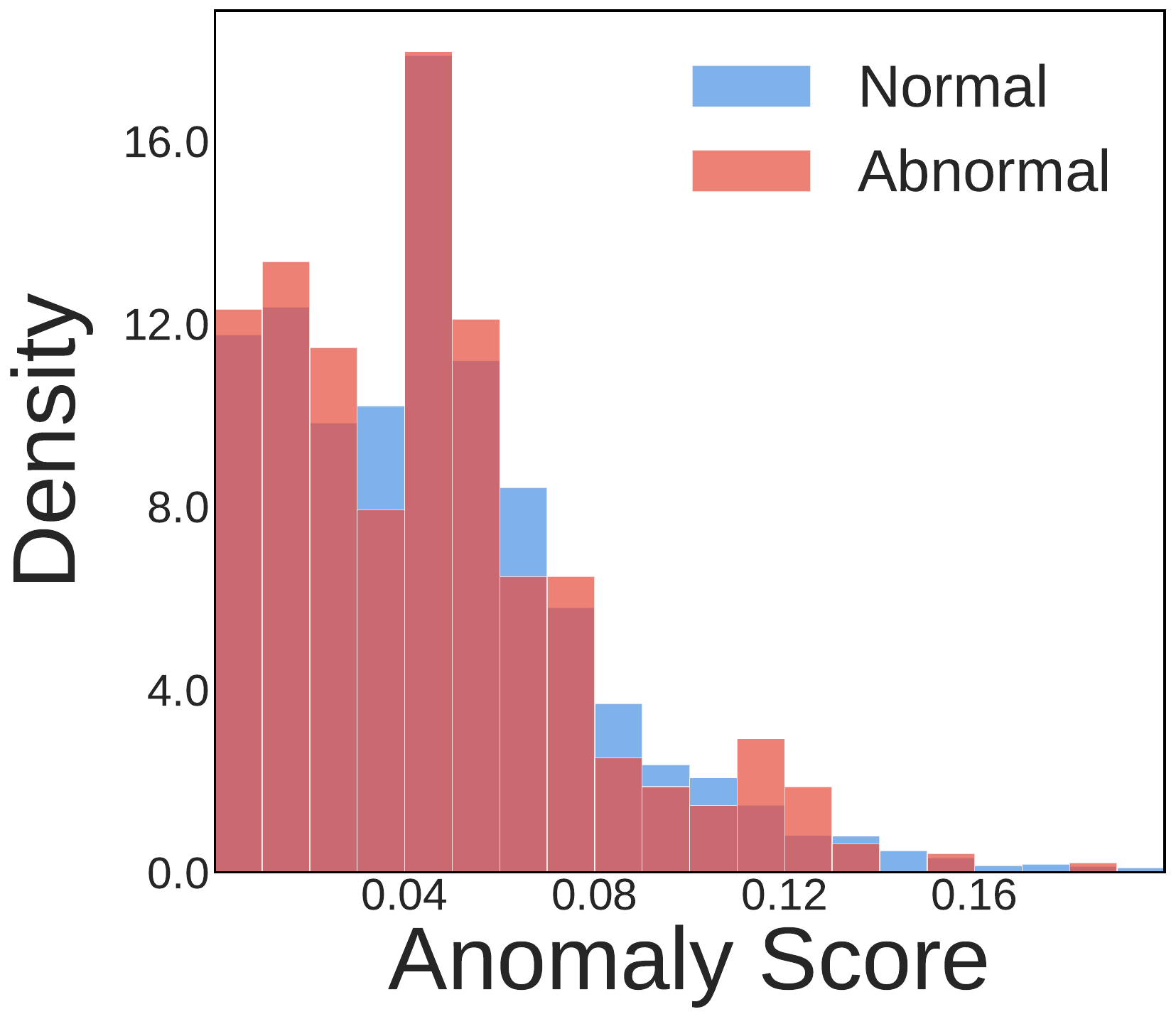}}
%     \subfigure[GraphNC(DOMINANT)]{\includegraphics[width=0.30\textwidth]{iclr2026/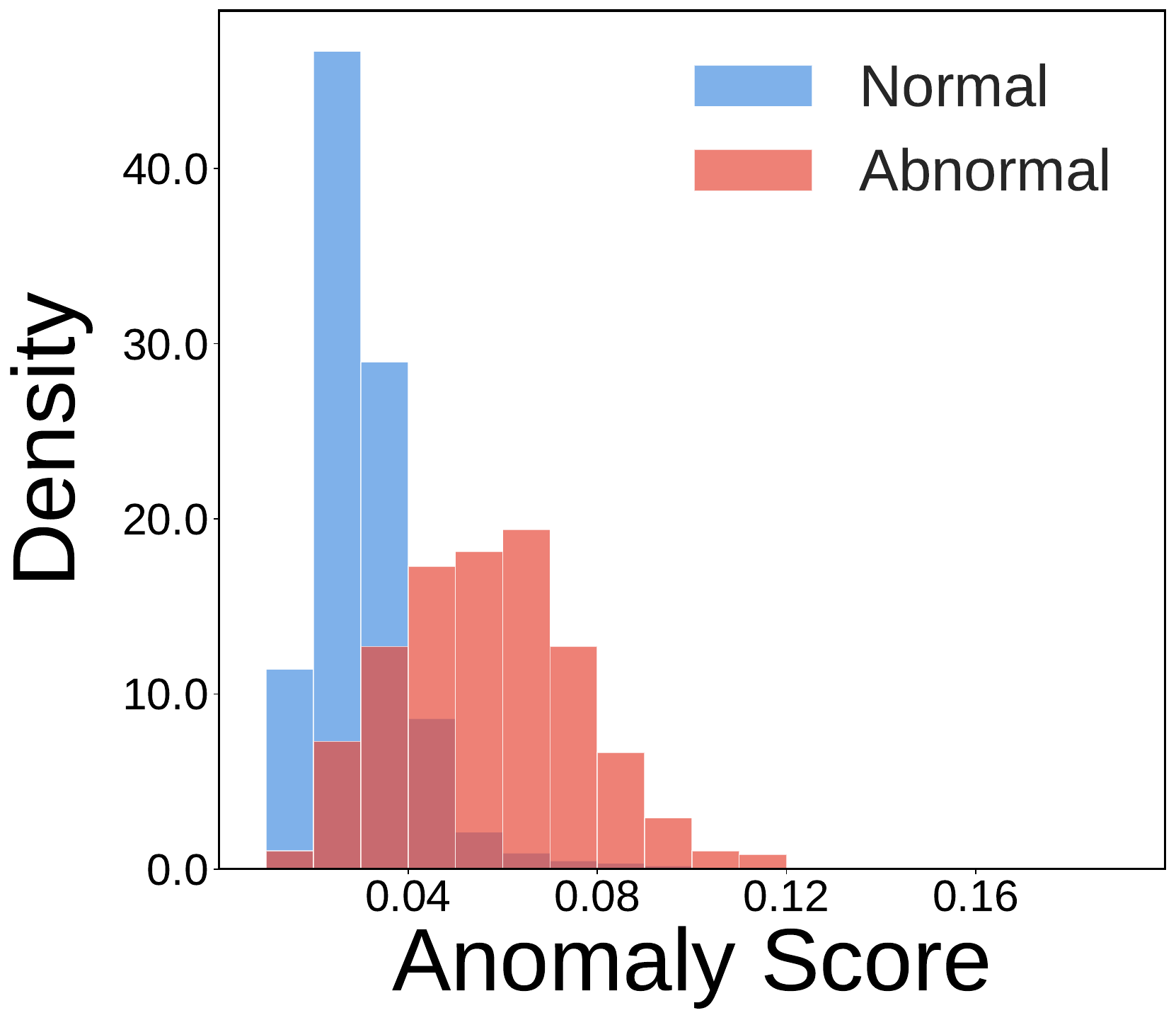}}
   
% \caption{ The score distribution of DOMINANT along the corresponding NomrDR enabled DOMINANT on Amazon \citep{dou2020enhancing}}
% \label{fig:example_all}
% \end{figure}
\begin{figure}[t]
\centering
\begin{subfigure}[t]{0.5\textwidth}
  \centering
  \includegraphics[width=0.7\linewidth]{attach/DOMINANT.pdf}
  \caption{DOMINANT}
\end{subfigure}\hfill
\begin{subfigure}[t]{0.5\textwidth}
  \centering
  \includegraphics[width=0.7\linewidth]{attach/GraphNC_DOMINANT.pdf}
  \caption{GraphNC(DOMINANT)}
\end{subfigure}
\caption{The score distribution of DOMINANT along the corresponding GraphNC enabled DOMINANT on Amazon \protect\citep{dou2020enhancing}.}
\label{fig:example_all1}
\end{figure}

\begin{figure}[t]
\centering
\begin{subfigure}[t]{0.5\textwidth}
  \centering
  \includegraphics[width=0.7\linewidth]{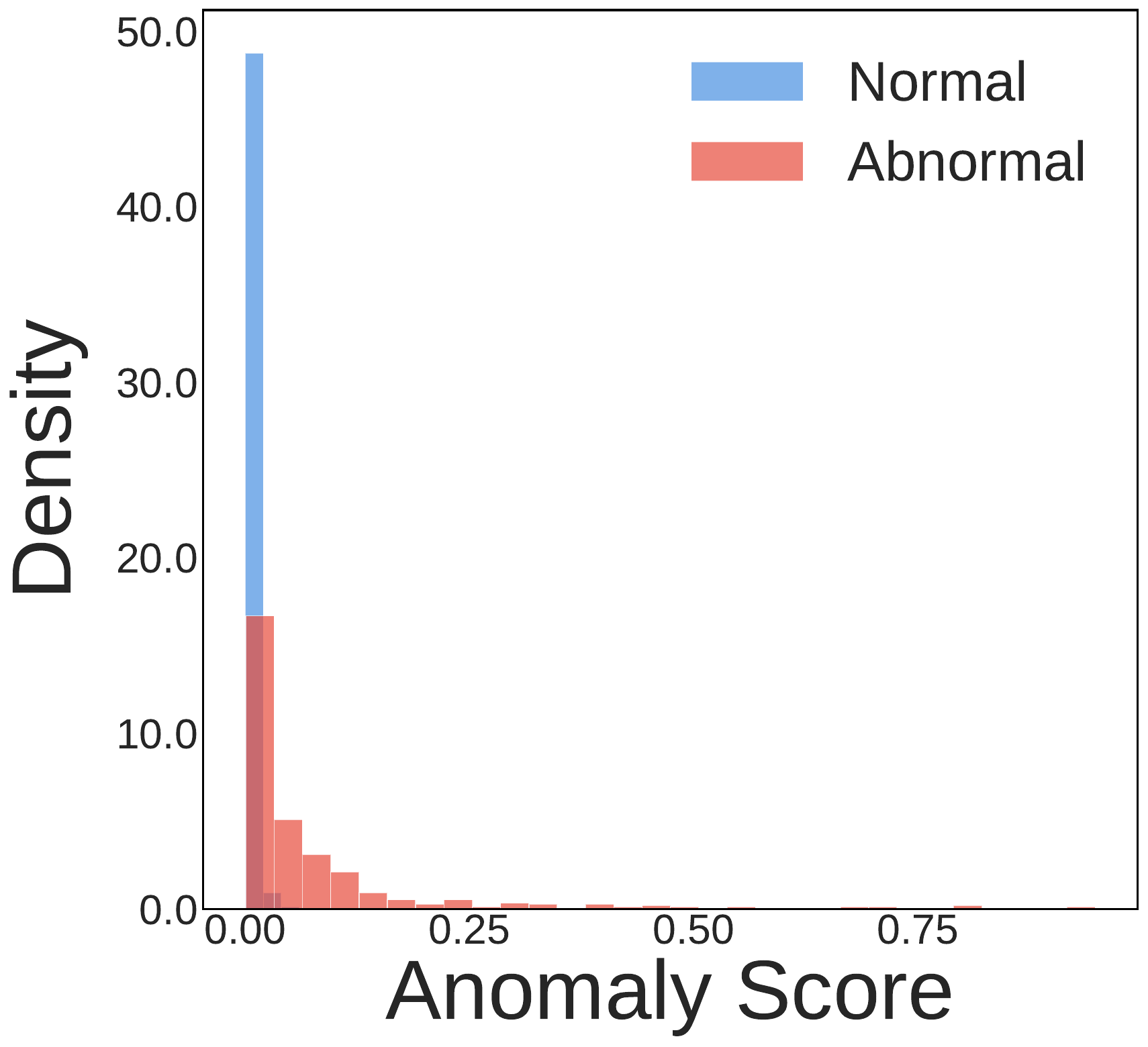}
  \caption{OCGNN}
\end{subfigure}\hfill
\begin{subfigure}[t]{0.5\textwidth}
  \centering
  \includegraphics[width=0.7\linewidth]{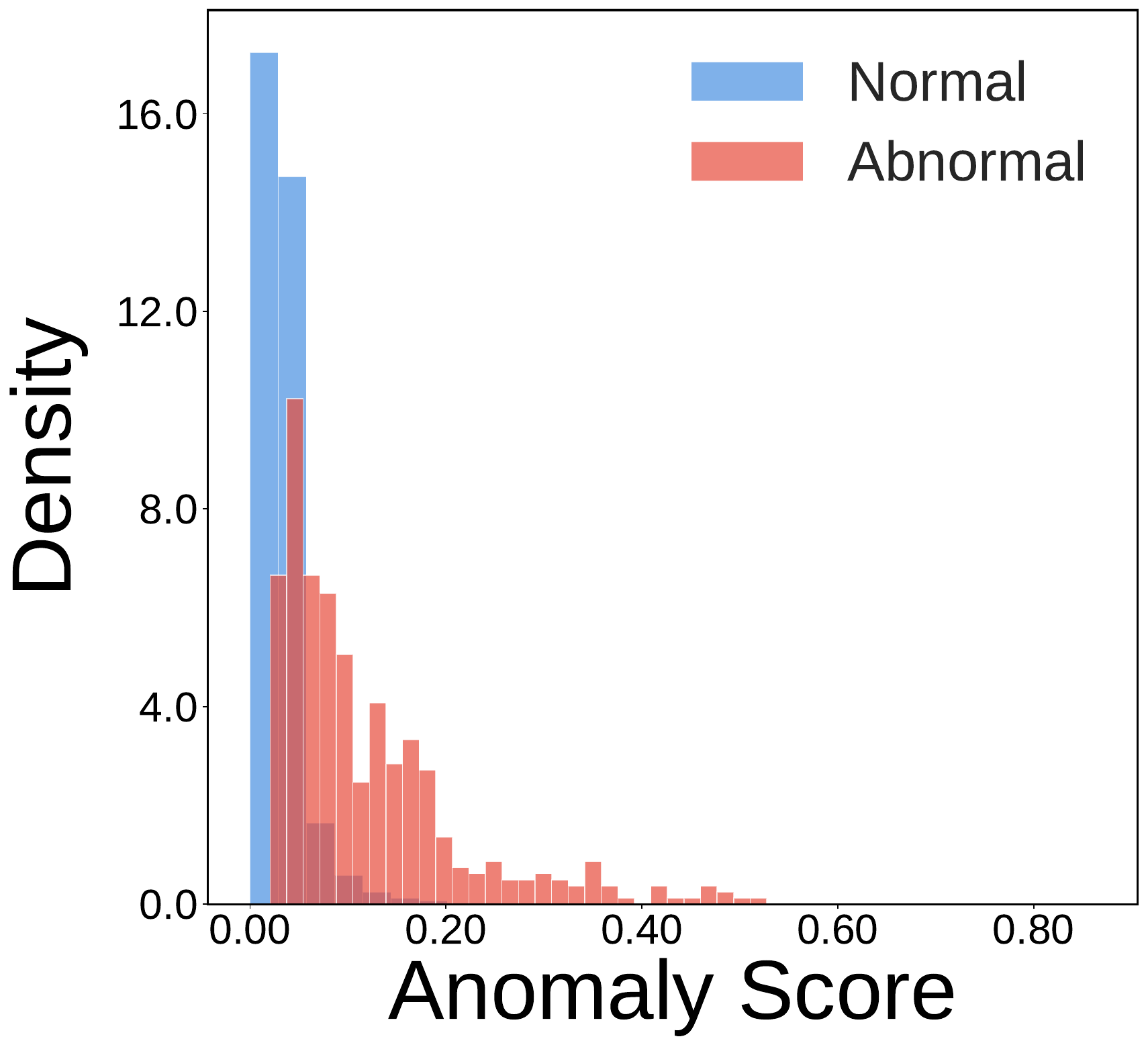}
  \caption{GraphNC(OCGNN)}
\end{subfigure}
\caption{The score distribution of OCGNN along the corresponding GraphNC enabled OCGNN on Amazon \protect\citep{dou2020enhancing}.}
\label{fig:example_all2}
\end{figure}

\begin{table*}[t]
\caption{AUROC and AUPRC results comparison of GraphNC and its five variants.} 
\label{albations}
\begin{center}
\resizebox{1.0\textwidth}{!}{
\begin{tabular}{c|l|cccccc|c}
\toprule
\hline
Metric & \ \ \ \ Method & Amazon & T-Finance & Reddit & YelpChi & Tolokers & Photo & Avg.\\
\midrule
\multirow{6}{*}{AUROC}
& OT &0.9443&0.8228&0.6354&0.6180&0.5340&0.6476&0.7003\\
& OT+NormReg  & 0.8956 & \textbf{0.8455} & 0.6317 & 0.6050 & 0.5843 & 0.6731 & 0.7058 \\
& OT+NormReg-Finetune  & 0.9060 & 0.8008 & 0.6089 & 0.5729 & 0.5396 & 0.6307 & 0.6764 \\
& OT+ScoreDA  & \underline{0.9511} & 0.8300 & \underline{0.6364} & 0.6263 & 0.5811 & 0.7397& 0.7274 \\
& OT+ScoreDA+NormReg*   & 0.9452& 0.8278 & 0.6212 & \underline{0.6271} & \underline{0.6492} & \underline{0.7433} & \underline{0.7356} \\
& OT+ScoreDA+NormReg  & \textbf{0.9613} & \underline{0.8340} & \textbf{0.6420} & \textbf{0.6630} & \textbf{0.6505} & \textbf{0.7693} & \textbf{0.7533} \\
% & OT  & \underline{0.9484} & 0.8313 & 0.6158 & \underline{0.6500} & \underline{0.6505} & 0.7364 & 0.4968 \\
\midrule
\multirow{6}{*}{AUPRC}
& OT &0.7922&0.1825&0.0520&\underline{0.2261}&0.2502&0.1442&0.2745\\
& OT+NormReg & 0.6568 & 0.2413 & 0.0520 & 0.2108 & 0.2688 & 0.1649 & 0.2657 \\
& OT+NormReg-Finetune  & 0.7008 & 0.1406 & 0.0454 & 0.1820 & 0.2465 & 0.1310 & 0.2410 \\
& OT+ScoreDA  & \underline{0.8189} & \underline{0.2566} & \underline{0.0530} & 0.2111 & 0.2567 & \underline{0.2969} & \underline{0.3155} \\
% & OT  & \underline{0.8231} & \textbf{0.2829} & \underline{0.0532} & \underline{0.2198} & \underline{0.3082} & 0.2954 & 0.4968 \\
& OT+ScoreDA+NormReg*  & 0.8180 & 0.1971 & 0.0523 & 0.1866 & \underline{0.3034} & 0.2697 & 0.3045 \\
& OT+ScoreDA+NormReg  & \textbf{0.8403} & \textbf{0.3667} & \textbf{0.0560} & \textbf{0.2389} & \textbf{0.3082} & \textbf{0.3561} & \textbf{0.3610} \\
\hline

\hline
\bottomrule
\end{tabular}}
\label{tab:ablation}
\end{center}
% \vspace{-2em}
\end{table*}

% \subsection{GraphNC Enabled Three Teacher Models}
% To demonstrate that GraphNC is flexible and can be plugged into various teacher models, 
% % not limited to the aforementioned anomaly-generation-based classification methods, 
% in addition to the default teacher model GGAD, 
% we further evaluate the performance GraphNC when plugging the reconstruction-based method DOMINANT and the one-class classification-based method OCGNN as a pre-trained teacher model into our model. The AUROC and AUPRC results are shown in Table \ref{tab:learn}. It is clear from the results that, similar to the results using GGAD as the teacher model, GraphNC can substantially and consistently improve the performance of DOMINANT and OCGNN in terms of both AUROC and AUPRC. In particular, on Tolokers and Photo, both teacher models struggle to deliver strong performance, whereas applying GraphNC leads to substantial gains in both AUROC and AUPRC, resulting in a 36.2\% and 26.3\%  AUROC improvement on Tolokers for DOMINANT and OCGNN, respectively. The main reason is that aligning the score distribution from the pre-trained teacher model in GraphNC enables the student to calibrate the coarse normality learned by the teacher. This is because that GraphNC can not only inherit the strengths of the teacher but also correct its inaccuracies and refines normality on its own, resulting in consistent performance improvements. Importantly, given a better teacher model, GraphNC can obtain better GAD performance, \eg, GraphNC (GGAD) vs. GraphNC (DOMINANT) and GraphNC (OCGNN).

\subsection{Ablation Study on All Datasets} \label{app:ablation_all_data}
In this section, we perform an ablation study to evaluate the contribution of each component in GraphNC. To this end, several variants of GraphNC are introduced. (1) Only Teacher (\textbf{OT}) directly uses the pre-trained semi-supervised teacher model to derive the anomaly score. (2) \textbf{OT+NormReg} incorporates NormReg into the pre-training of the teacher model. (3) \textbf{OT+NormReg+Finetune} leverages NormReg to fine-tune the pre-trained teacher. (4) \textbf{OT+ScoreDA} includes the student model and adds ScoreDA on top of OT (\ie, optimizing GraphNC using ScoreDA only). (5) \textbf{OT+ScoreDA+NormReg*} applies NormReg on top of OT+ScoreDA but it applies the NormReg loss on all nodes instead of the labeled normal nodes only. \textbf{OT+ScoreDA+NormReg} is the default full model using both ScoreDA and NormReg. The experimental results are shown in Table \ref{tab:ablation}. We observe that OT+ScoreDA enhances the performance of OT, achieving the second-best results on four datasets in terms of AUPRC. This demonstrates that the ScoreDA component can effectively calibrate the normality learned from the teacher, ensuring better separation 
between normal and abnormal classes. However, OT+ScoreDA underperforms the full model (the last row) on all datasets, highlighting that the negative impact of inaccurate anomaly scores from the teacher model can be effectively mitigated by the NormReg component. On the other hand, applying NormReg directly to the teacher via either joint optimization (OT+NormReg) or finetuning (OT+NormReg+Finetune) can improve OT to some extent. This justifies the effectiveness of NormReg from a different perspective. However, NormReg achieves the best performance when applied to the student model and combined with ScoreDA, as their joint effects help minimize the detection errors discussed in our theoretical analysis.

\subsection{The Performance of GraphNC When the Teacher Performs Poorly}
To evaluate the performance of GraphNC when the teacher performs poorly (AUROC $\leq$ 0.5), we inject anomalies into the labeled normal nodes to deliberately degrade the teacher’s performance.  When the teacher model performs poorly, we still apply score distribution alignment and NormReg to achieve normality calibration. The experimental results using GGAD as teacher model are shown in Table \ref{toxic_auc}. 

From the results, we observe that when the teacher's performance is poor, \textit{e.g.}, the AUROC is lower than 0.5 on some datasets, GraphNC can largely enhance the performance. The main reason is that the errors introduced by teacher distillation mainly affect a small portion of nodes, for example, some normal nodes may receive slightly higher scores (while most normal nodes still remain low), and some anomalies may obtain lower scores. However, NormReg relies on the explicitly labeled normal nodes, rather than the teacher’s predictions, to enforce the compactness of the normality cluster. Consequently, during the ScoreDA process, the normal cluster continues to be tightened, while the correctly predicted normal nodes guide the adjustment of other normal nodes. This helps correct some of the teacher’s mistakes, ensuring that GraphNC can still achieve performance gains even when the teacher performs poorly.

Moreover, as shown in Table \ref{tab:learn}, we observe that DOMINANT and OCGNN perform poorly on certain datasets. For instance, on Photo, DOMINANT exhibits notably low performance, suggesting that the teacher’s label distribution becomes unreliable during score alignment. GraphNC leverages NormReg to prevent overfitting to these inaccurate supervisory signals and to better exploit the limited yet accurate normal supervision, thereby improving overall performance. Finally, we summarize the average improvement of GraphNC over the teacher model across different teacher score ranges, as presented in Table \ref{statistic}.

\begin{table*}[t]
\caption{AUROC and AUPRC results of GraphNC and GGAD* (GGAD is degraded by injecting anomalies into the labeled normal set).}
\label{toxic_auc}
\begin{center}
\resizebox{0.85\textwidth}{!}{
{\begin{tabular}{c|c|cccccc|c}
\hline
\hline
\multirow{2}*{\textbf{Metric}}&\multirow{2}*{\textbf{Method}} & \multicolumn{6}{c|}{\textbf{Dataset}}\\
&&Amazon &T-Finance &Reddit &YelpChi &Tolokers &Photo &Avg.\\
\hline
\multirow{2}{*}{AUROC}
% &RHO  &0.9302&\textbf{0.8623}&0.6207&0.0000&\underline{0.6255}&\underline{0.7129}&0.4877\\
&GGAD*   &0.6336&0.5132&0.4483&0.5130&0.4504&0.4465&0.5008\\
&GraphNC   & \textbf{0.7769} & \textbf{0.7450} & \textbf{0.5303} & \textbf{0.5685} & \textbf{0.6291} & \textbf{0.5970} & \textbf{0.6411} \\
\hline
\hline
\multirow{2}{*}{AUPRC}
% &RHO  &0.7879&\textbf{0.4893}&\textbf{0.0616}&0.0000&\underline{0.3256}&\underline{0.2337}&0.4877\\
&GGAD*   &0.1171&0.0443&0.0276&0.1603&0.1946&0.0856&0.1049\\
&GraphNC   & \textbf{0.2274} & \textbf{0.0930} & \textbf{0.0375} & \textbf{0.1823} & \textbf{0.2930} & \textbf{0.1145} & \textbf{0.1580} \\
\hline 
\hline
\end{tabular}
}}
\label{tab:main}
% \vspace{-2em}
\end{center}
\end{table*}

\subsection{Analysis on True Anomalous Instances Recognition}
To offer more insights into the dependency between our model and the teacher model, we provide statistics for the number of anomalous instances which are accurately detected by the teacher (GGAD) versus those detected by GraphNC in Fig. \ref{fig:trueAnomalous}. We can observe that the number of anomalous instances accurately detected by GraphNC is consistently and substantially higher than the teacher model across different datasets where the teacher model's performance varies largely. This indicates that GraphNC can go far beyond the supervision from the teacher model to detect anomalies that the teacher model missed. We attribute this capability to the NormReg component and its synergy with the ScoreDA component.

\begin{figure}[t]
  \centering
  \includegraphics[width=0.4\columnwidth]{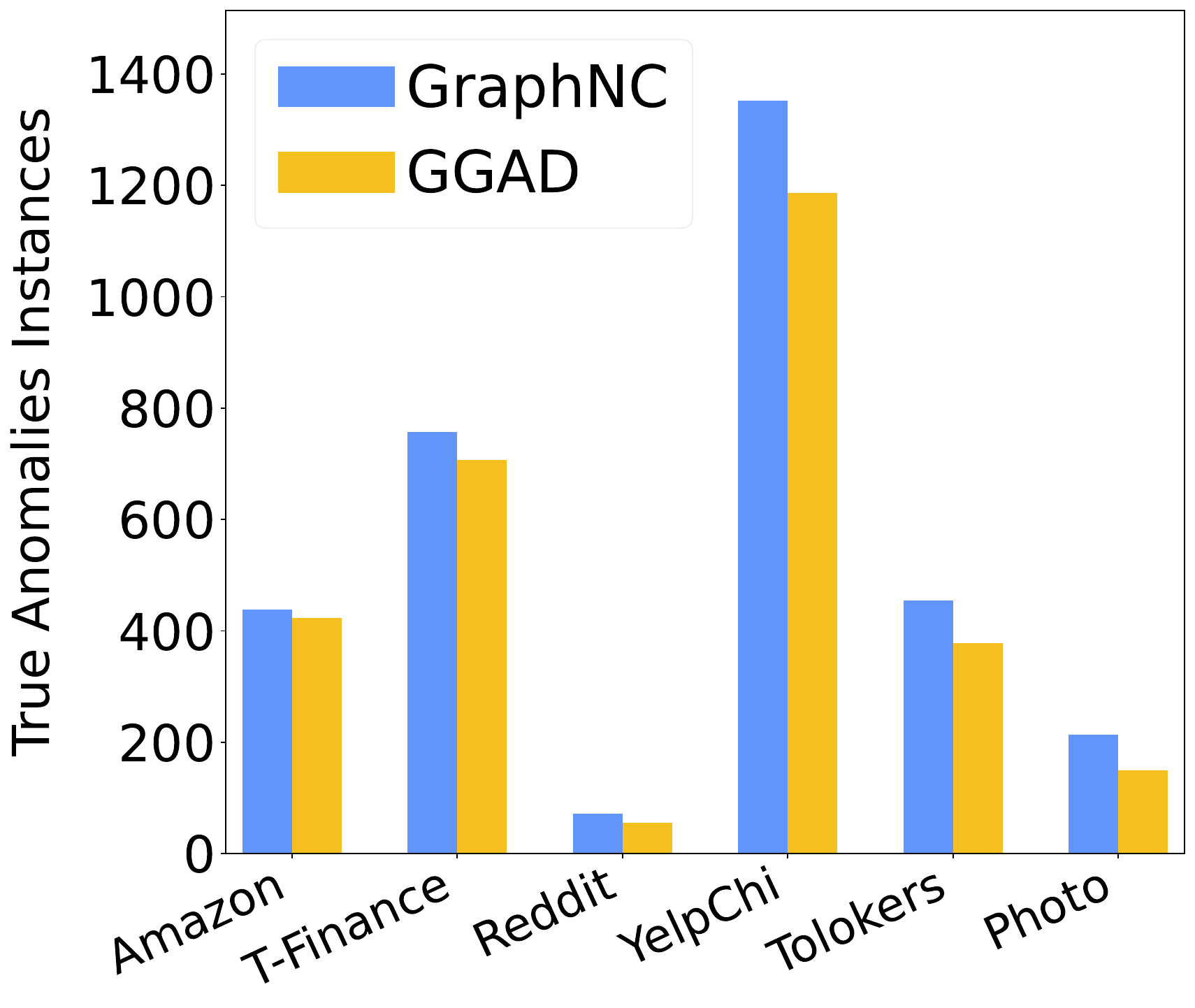}
  \caption{The number of true anomalous instances that are accurately detected by the teacher (GGAD) versus those detected by GraphNC.}
  \label{fig:trueAnomalous}
\end{figure}

% \begin{table*}[t]
% \caption{The number of true anomalous instances that are accurately detected by the teacher (GGAD) versus those detected by GraphNC. \hz{this table can be replaced by a figure}}
% \label{number}
% \begin{center}
% \resizebox{0.8\textwidth}{!}{
% {\begin{tabular}{c|cccccc|c}
% \hline
% \hline
% \multirow{2}*{\textbf{Method}} & \multicolumn{6}{c|}{\textbf{Dataset}} & \multirow{2}*{\textbf{Avg.}}\\
% &Amazon &T-Finance &Reddit &YelpChi &Tolokers &Photo & \\
% \hline
% GGAD     &423 &707 &55 &1186 &378 &150 &483\\
% GraphNC  &\textbf{438} &\textbf{757} &\textbf{71} &\textbf{1352} &\textbf{455} &\textbf{214} &\textbf{548} \\
% \hline
% \hline
% \end{tabular}
% }}
% \label{tab:main}
% % \vspace{-2em}
% \end{center}
% \end{table*}

\begin{table*}[t]
\caption{Statistics of the average AUROC improvement of GraphNC w.r.t. different accuracy levels of supervision from the teacher model GGAD. }
\label{statistic}
\begin{center}
{\resizebox{0.6\textwidth}{!}{
\begin{tabular}{c|cccc}
\hline
\hline
Teacher (GGAD) & (0.4$\sim$0.5) & (0.5$\sim$0.6) & (0.6$\sim$0.7) & (0.7$\sim$0.9) \\
\hline
GraphNC   & 0.1345 & 0.1128 & 0.0764 & 0.0204 \\
\hline
\hline
\end{tabular}
}}
\end{center}
\end{table*}
\subsection{Statistics of the Average AUROC Improvement of GraphNC w.r.t. Different Accuracy Levels}
We summarize the average improvement of GraphNC over the teacher model across different teacher score ranges, as presented in Table \ref{statistic}. From the results, we observe that when the teacher performs poorly, GraphNC achieves substantially larger improvements (e.g., $\Delta$(0.4$\sim$0.5) $>$ $\Delta$(0.7$\sim$0.9)). This is because the weaker teacher leaves more room for enhancement, and GraphNC is robust to different level of noisy supervision from the teacher model. This further demonstrates the robustness and effectiveness of GraphNC in normality calibration, even when the teacher performs poorly.

\subsection{GraphNC VS. GGAD with Two-stage Training}
To further demonstrate that the gain of GraphNC is not simply brought by the two-stage training procedure,
we design GGAD** as an additional baseline that  
replaced the student architecture and training objective (BCE) with those of GGAD for second-stage training.
Specifically, GGAD** is first trained on the labeled normal nodes and then used to generate pseudo-labels for unlabeled nodes. We then determine a threshold of 0.7 to separate normal and abnormal nodes, and apply the BCE loss for subsequent training. 

As shown in Fig. \ref{fig:variant} above, we observe that GraphNC consistently outperforms GGAD** on four datasets and in average performance. On T-Finance, simply incorporating the teacher loss slightly outperforms GraphNC. The main reason is that the chosen threshold may occasionally assist the model in identifying some anomalies. However, this improvement is not stable, as it is highly sensitive to threshold selection and the inaccuracy of predicted anomaly scores. Overall, Fig. \ref{fig:variant} and the comprehensive ablation study in Sec. \ref{app:ablation_s} and App. \ref{app:ablation_all_data} indicate that directly leveraging the teacher loss in the second stage is less effective than GraphNC.

% \begin{figure}[t]
%   \centering
%   \includegraphics[width=0.4\columnwidth]{icml2026/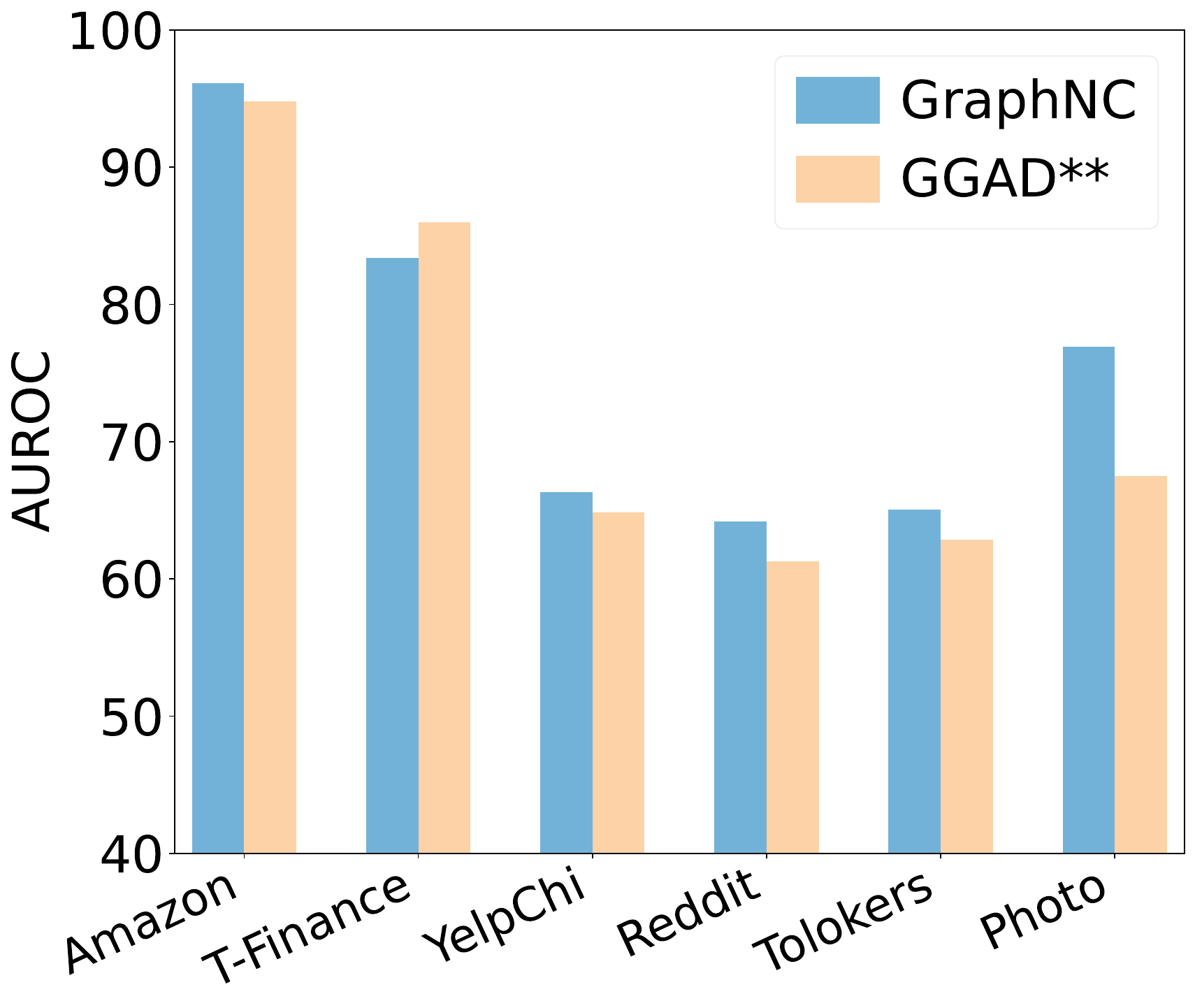}
%   \caption{AUROC comparison with GGAD**.}
%   \label{fig:variant}
% \end{figure}

\begin{figure}[t]
\centering
\begin{subfigure}[t]{0.5\textwidth}
  \centering
  \includegraphics[width=0.7\linewidth]{ggad_variant.pdf}
  \caption{AUROC}
\end{subfigure}\hfill
\begin{subfigure}[t]{0.5\textwidth}
  \centering
  \includegraphics[width=0.7\linewidth]{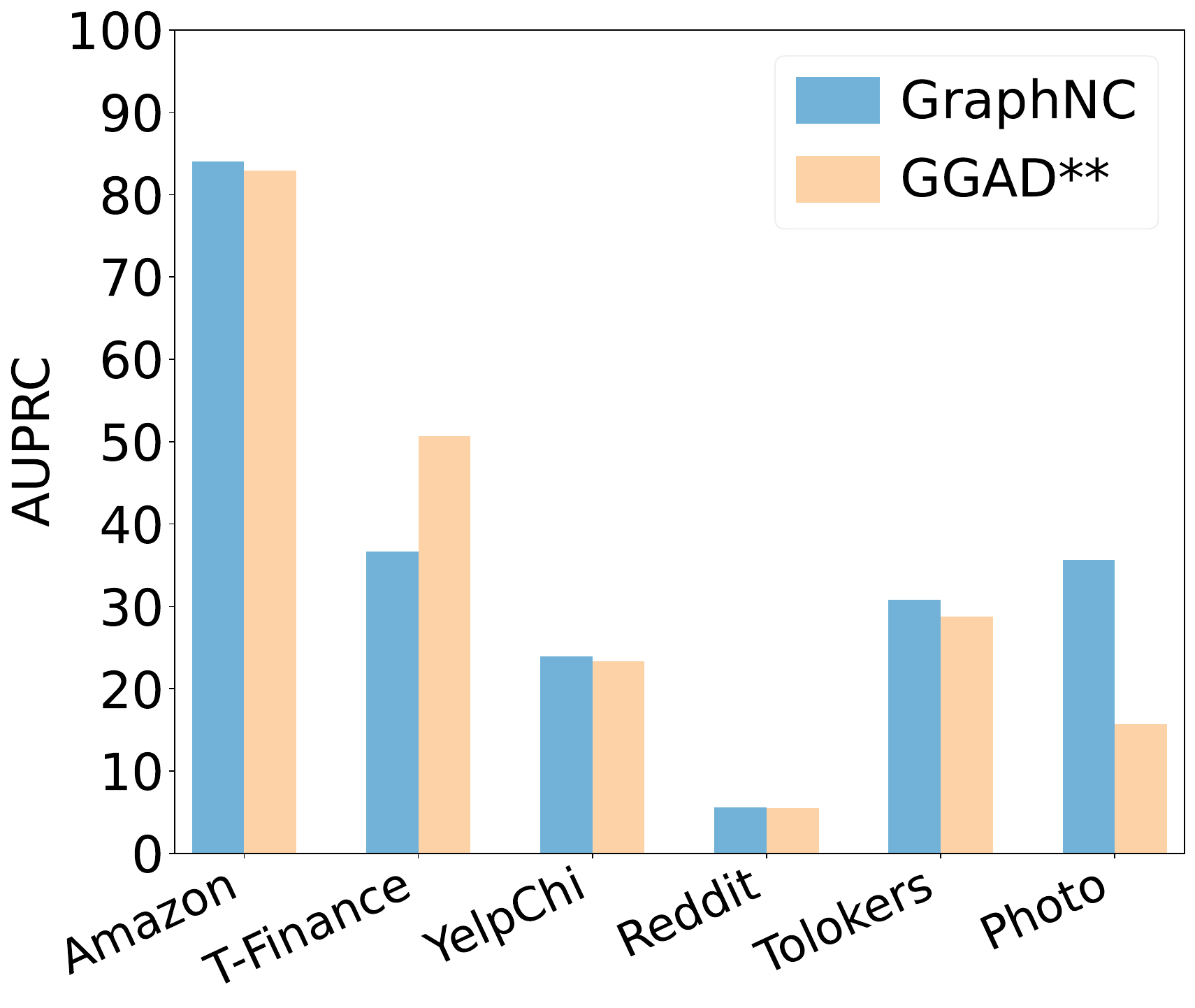}
  \caption{AUPRC}
\end{subfigure}
\caption{AUROC and AUPRC comparison with GGAD**.}
\label{fig:variant}
\end{figure}

\subsection{More Analysis on the FP and FN Results}
Unlike AUROC and AUPRC provided in the main experiments, FP and FN are threshold-dependent metrics, as they require a specific decision threshold to separate normal and abnormal nodes.
To analyze the sensitivity of the threshold $\gamma$ used to separate normal and abnormal nodes in the calculation of FP and FN, we provide a threshold analysis by varying the $\gamma$ from 0.7 to 0.9. As shown in Table \ref{tab:fpfn_thresholds}, under varying thresholds, GraphNC consistently produces fewer FP and FN than GGAD, which further highlights the effectiveness of our method.

\begin{table*}[t]
\caption{FP/FN comparison under different thresholds.}
\label{tab:fpfn_thresholds}
\begin{center}

\setlength{\tabcolsep}{14pt} % default is around 6pt, larger means wider
\renewcommand{\arraystretch}{1.15} % optional: increase row height

\begin{tabular}{c|c|cc|cc}
\hline
\hline
\multirow{2}{*}{\textbf{Threshold}} & \multirow{2}{*}{\textbf{Method}} 
& \multicolumn{2}{c|}{\textbf{Amazon}} 
& \multicolumn{2}{c}{\textbf{Tolokers}} \\
\cline{3-6}
& & \textbf{FP} & \textbf{FN} & \textbf{FP} & \textbf{FN} \\
\hline
\multirow{2}{*}{$\gamma$ = 0.7}
& GGAD    & 1721 & 51  & 1582 & 1024 \\
& GraphNC & 1710 & 40  & 1465 & 902  \\
\hline
\multirow{2}{*}{$\gamma$ = 0.8}
& GGAD    & 1029 & 66  & 1074 & 1216 \\
& GraphNC & 993  & 40  & 957  & 1099 \\
\hline
\multirow{2}{*}{$\gamma$ = 0.9}
& GGAD    & 361  & 125 & 515  & 1368 \\
& GraphNC & 315  & 79  & 470  & 1318 \\
\hline
\hline
\end{tabular}

\end{center}
\end{table*}

\subsection{Effects of Data Augmentation}
To further investigate the impact of different augmentation strategies on our results, we replaced the masking strategy in GraphNC with a random edge dropping, referred to as GraphNC-RED, and the corresponding results are presented in Fig. \ref{fig:red}. We observe that GraphNC-RED yields consistent performance with GraphNC, suggesting that perturbation strategies such as random masking and edge dropping are both effective within the GraphNC framework. The main reason is that these perturbations act as regularization mechanisms, enhancing the model’s robustness and encouraging it to capture more invariant patterns rather than relying on potentially noisy or dataset-specific signals. Since the structural properties and anomaly distributions differ across datasets, the relative benefit of each perturbation strategy may also vary. Since graph anomalies are often manifested as anomalies in node features, we adopted feature perturbation to calibrate normality during score alignment.

% \begin{figure}[t]
%   \centering
%   \includegraphics[width=0.4\columnwidth]{icml2026/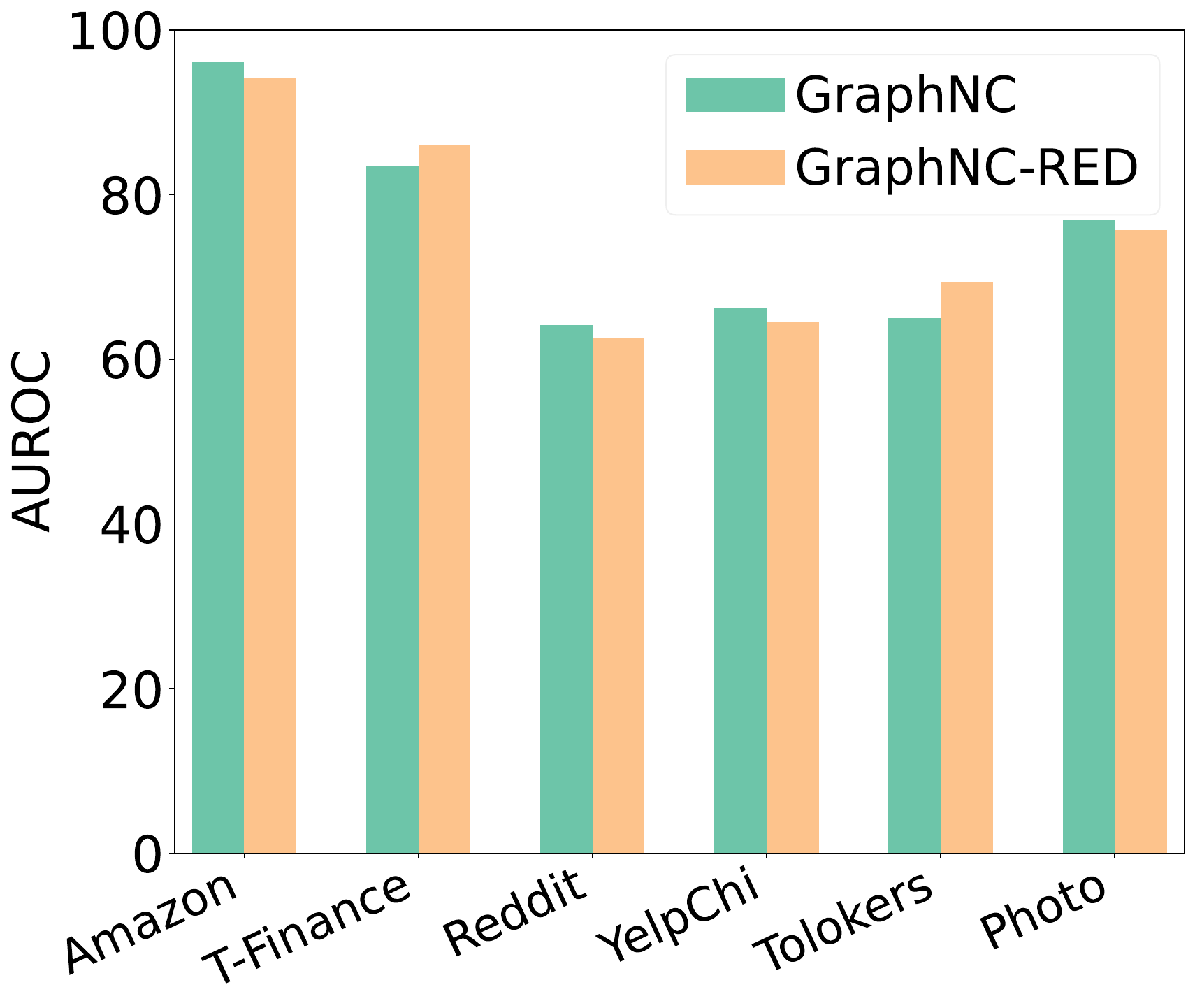}
%   \caption{AUROC comparison with GraphNC-RED.}
%   \label{fig:red}
% \end{figure}

\begin{figure}[t]
\centering
\begin{subfigure}[t]{0.5\textwidth}
  \centering
  \includegraphics[width=0.7\linewidth]{graphnc_red.pdf}
  \caption{AUROC}
\end{subfigure}\hfill
\begin{subfigure}[t]{0.5\textwidth}
  \centering
  \includegraphics[width=0.7\linewidth]{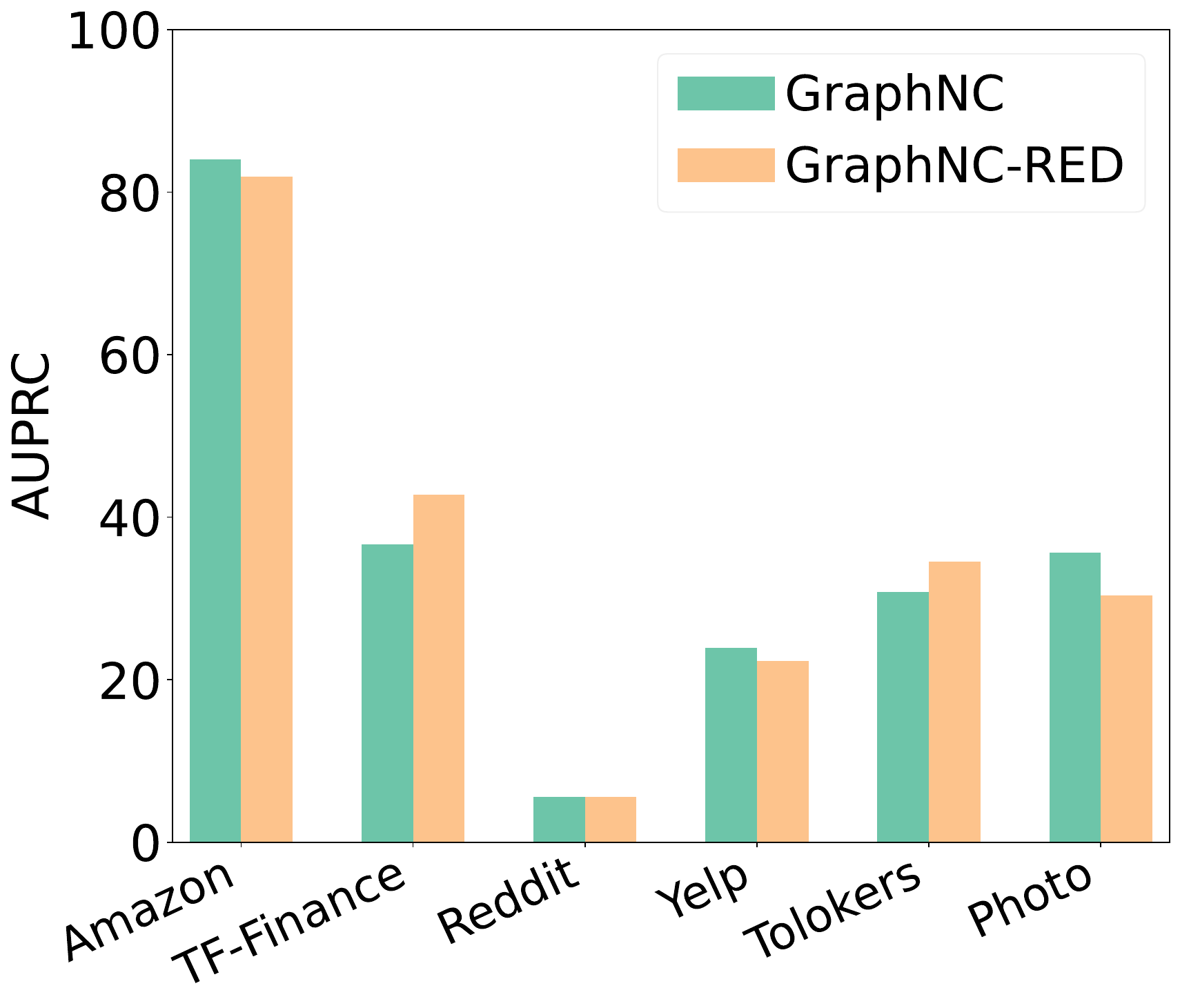}
  \caption{AUPRC}
\end{subfigure}
\caption{AUROC and AUPRC comparison with GraphNC-RED.}
\label{fig:red}
\end{figure}

\section{Theoretical Analysis} \label{theorem_analyse}
\setcounter{theorem}{0}
\begin{theorem}
Let $\sigma_\mathcal{T}^2$ and $\sigma_\mathcal{S}^2$ denote the variance of the anomaly scores yielded by the teacher model and the student model in the normal class, respectively, then under our GraphNC framework, the student model achieves shrinking score variance (\ie, $\sigma_\mathcal{S}^2 < \sigma_\mathcal{T}^2$) by minimizing $\mathcal{L}_{NormReg}$ while applying $\mathcal{L}_{ScoreDA}$, thereby reducing False Positive Rate (FPR) and False Negative Rate (FNR).
\end{theorem}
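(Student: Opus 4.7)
The plan is to decompose the argument into three stages: (i) show that $\mathcal{L}_{NormReg}$ induces representation invariance on the labeled normal set; (ii) transfer this invariance to the anomaly-score space via the Lipschitz continuity of the MLP head and thereby obtain a strict variance contraction; (iii) translate score-variance shrinkage into bounds on FPR and FNR.

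First, I would use the fact that the student score factors as $y_i^\mathcal{S} = \mathcal{F}_{MLP}(\mathbf{h}_i^\mathcal{S}; \phi)$ with $\mathcal{F}_{MLP}$ an $L$-Lipschitz map (standard for MLPs with bounded weights and Lipschitz activations). Minimizing $\mathcal{L}_{NormReg}$ drives $\|\mathbf{h}_i^\mathcal{S} - \widetilde{\mathbf{h}}_i^\mathcal{S}\|_2 \to 0$ on $\mathcal{V}_l$, so by Lipschitzness $|y_i^\mathcal{S} - \widetilde{y}_i^\mathcal{S}| \leq L\,\|\mathbf{h}_i^\mathcal{S} - \widetilde{\mathbf{h}}_i^\mathcal{S}\|_2 \to 0$. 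Because the attribute masking with ratio $\omega$ samples a rich family of perturbed views that cover diverse normal patterns, this pointwise invariance extends to a neighborhood of the normal manifold under a mild smoothness/cluster assumption on the normal class.

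Second, I would isolate the effect of jointly optimizing $\mathcal{L}_{ScoreDA} + \alpha\,\mathcal{L}_{NormReg}$ on the normal population. ScoreDA alone would drive $y_i^\mathcal{S}$ toward $y_i^\mathcal{T}$ in the mean-squared sense, hence $\sigma_\mathcal{S}^2 \to \sigma_\mathcal{T}^2$. Adding $\alpha\,\mathcal{L}_{NormReg}$ adds a penalty that is minimized precisely when score fluctuations across augmented normal views vanish. Using a bias--variance decomposition of the combined objective restricted to the normal class, I would show that the joint minimizer shares (approximately) the teacher's normal-class mean, $\mathbb{E}[y^\mathcal{S}\mid \text{normal}] \approx \mathbb{E}[y^\mathcal{T}\mid \text{normal}]$, while $\sigma_\mathcal{S}^2 \leq \sigma_\mathcal{T}^2 - c(\alpha, L, \omega)$ for some strictly positive constant $c$ whenever $\alpha>0$ and the teacher exhibits nonzero intra-class score noise on $\mathcal{V}_l$. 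This gives $\sigma_\mathcal{S}^2 < \sigma_\mathcal{T}^2$.

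Third, I would translate the variance contraction into detection-error bounds. For a threshold $\tau$ with $\tau > \mathbb{E}[y^\mathcal{S}\mid\text{normal}]$, Chebyshev's inequality gives $\mathrm{FPR} \leq \sigma_\mathcal{S}^2 / (\tau - \mathbb{E}[y^\mathcal{S}\mid\text{normal}])^2$, which is monotone in $\sigma_\mathcal{S}^2$. A symmetric argument, combined with the fact (established in Sec.~4.2) that ScoreDA pushes abnormal scores toward the opposite end, yields a reduction of $\mathrm{FNR}$ via improved class separation around $\tau$.

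The main obstacle will be step (ii): establishing that the variance contraction is \emph{strict} rather than merely non-increasing, and that the compactness gain obtained on the small labeled set $\mathcal{V}_l$ propagates to the whole normal population. I would handle this by (a) invoking a cluster/low-density-separation assumption so that labeled normal nodes are representative of the normal manifold, and (b) noting that the shared GNN+MLP parameters, regularized by $\mathcal{L}_{NormReg}$ on $\mathcal{V}_l$, must also produce smoother scores on unlabeled normal nodes, transferring the variance reduction. The empirical evidence in Fig.~\ref{fig:homo}(c) and Fig.~\ref{fig:example}(e) can then be cited as a sanity check of this propagation.
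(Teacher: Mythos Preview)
Your proposal takes a genuinely different route from the paper's proof. The paper (App.~\ref{theorem_analyse}) organizes the same three stages differently: for the ScoreDA part it invokes the law of total variance, arguing that the MSE-optimal student computes $\mathcal{F}_\mathcal{S}^*(\mathbf{X})=\mathbb{E}[T\mid\mathbf{X}]$ and hence $Var(\mathcal{F}_\mathcal{S}^*(\mathbf{X}))\leq \sigma_\mathcal{T}^2$ \emph{already without} NormReg; for the NormReg part it writes labeled-normal embeddings as $h_i=\mu_0+\epsilon_i$, $\tilde h_i=\mu_0+\tilde\epsilon_i$ with independent zero-mean perturbations, so that $\mathbb{E}\|\epsilon_i-\tilde\epsilon_i\|^2=\sigma_\epsilon^2+\sigma_{\tilde\epsilon}^2$ and minimizing $\mathcal{L}_{NormReg}$ directly shrinks the embedding (hence score) variance, giving the strict inequality; and for the error-rate part it assumes sub-Gaussian scores and uses the tail bound $\mathbb{P}(S-\mu_0\geq t)\leq\exp(-t^2/2\sigma^2)$ at a fixed threshold $\tau$, rather than Chebyshev. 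Your Lipschitz-plus-Chebyshev path is more elementary and avoids the sub-Gaussian assumption, at the price of a looser tail; the paper's law-of-total-variance step, on the other hand, gives a clean mechanism for why distillation alone already contracts variance, which your sketch does not exploit.

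The concrete gap in your plan is step~(ii). You assert that ScoreDA alone drives $\sigma_\mathcal{S}^2\to\sigma_\mathcal{T}^2$ and then that a ``bias--variance decomposition of the combined objective'' yields $\sigma_\mathcal{S}^2\leq\sigma_\mathcal{T}^2-c(\alpha,L,\omega)$, but neither claim is derived: the first is in tension with the conditional-expectation contraction the paper uses, and the second leaves the decomposition and the constant $c$ entirely unspecified. To close this, you would need either the paper's law-of-total-variance argument (to get $\leq$ from ScoreDA) plus its $\mu_0+\epsilon$ decomposition (to get strictness from NormReg), or an explicit computation of the joint minimizer on the normal class. Your worry about propagating compactness from $\mathcal{V}_l$ to the full normal population is well-placed; the paper sidesteps it by working at the population level with the noise decomposition rather than through a cluster/low-density assumption, which you could adopt directly.
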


\begin{proof}
    We denote the teacher model’s score on the normal class by the random variable $T$ (\ie, $S_T|y=0$) $ \sim (\mu_0,\sigma_T^2)$, 
    which follows a sub-Gaussian distribution, with a mean of $\mu_0$ and a variance of $Var(T)=\sigma_\mathcal{T}^2$. %The total variance formula is formulated as
    The law of total variance is given by
    \begin{equation}
        Var(T) = Var (\mathbb{E}[T|\mathbf{X}]) + \mathbb{E}(Var(T|\mathbf{X})),
    \end{equation}
    where $\mathbf{X}$ is the input feature. The  optimal solution of the student model $\mathcal{F}_{\mathcal{S}}$ learned through soft score distillation with the input feature is $\mathcal{F}_{\mathcal{S}}^{*}(\mathbf{X}) = \mathbb{E}[T|\mathbf{X}]$, then we have
    \begin{equation}
        Var(\mathcal{F}_{\mathcal{S}}^{*}(\mathbf{X})) = Var(\mathbb{E}[T|\mathbf{X}])\leq Var(T) = \sigma_\mathcal{T}^2.
    \end{equation}
  By optimizing the $\mathcal{L}_{ScoreDA}$, the variance of the predicted scores of the student model on normal nodes is usually not larger than that of the teacher model.
    
    Furthermore, the designed $\mathcal{L}_{NormReg}$ loss further shrinks the intra-class variance of normal classes on the student model. $\mu_0$ can be regarded as the potential true center of the normal class, and then the representation of each normal node $v_i$ in the two views of the student model can be written as $h_i = \mu_0 + \epsilon_i, \quad \widetilde{h}_i = \mu_0 + \widetilde{\epsilon}_i$, where $\epsilon_i$ and $\widetilde{\epsilon}_i$ are the perturbations or noise introduced by two different views. We assume that the perturbations in both views are statistically independent and have zero mean, \ie, $\mathbb{E}[\epsilon_i] = \mathbb{E}[\widetilde{\epsilon}_i] = 0$. By substituting $h_i$ and $\widetilde{h}_i$ into the $\mathcal{L}_{NormReg}$, we can redefine it as
    \begin{equation}
        \mathcal{L}_{NormReg} = \frac{1}{|{\cal V}_l|} \sum_{v_i \in {\cal V}_l} \|\epsilon_i - \widetilde{\epsilon}_i\|_2^2 
    \end{equation}    
    By unfolding the loss and then computing its expectations, we obtain 
    \begin{equation}
    \begin{aligned}
        \mathbb{E}[\|\epsilon_i - \widetilde{\epsilon}_i\|_2^2] & =  \mathbb{E}[\|\epsilon_i\|_2^2 + \|\widetilde{\epsilon}_i\|_2^2-2 \langle \epsilon_i, \widetilde{\epsilon}_i \rangle]\\
       & = \mathbb{E} [\|\epsilon_i\|_2^2] + \mathbb{E}[\|\widetilde{\epsilon}_i\|_2^2] - 2 \langle \mathbb{E} [\epsilon_i], \mathbb{E} [\widetilde{\epsilon}_i] \rangle
        \end{aligned}
    \end{equation}
    Since $\epsilon_i$ and $\widetilde{\epsilon}_i$ are independent, and  $\langle
    \mathbb{E} [\epsilon_i], \mathbb{E} [\widetilde{\epsilon}_i] \rangle =0$. We have,
    \begin{equation}
        \mathbb{E}[\|\epsilon_i - \widetilde{\epsilon}_i\|_2^2] = \mathbb{E} [\|\epsilon_i\|_2^2] + \mathbb{E}[\|\widetilde{\epsilon}_i\|_2^2] = \sigma_{\epsilon}^2 + \sigma_{\widetilde{\epsilon}}^2,
    \end{equation}
    where $\sigma_{\epsilon}^2$ and $\sigma_{\widetilde{\epsilon}}^2$ are the population variances of the two views in the student model, respectively.  By minimizing $\mathcal{L}_{NormReg}$, the node embeddings tend to be consistent across the two views (\ie, the consistent variances pattern) and the population variance decreases, making the normal node embeddings more compactly clustered around $\mu_0$. It is known that $Var(\mathcal{F}_{\mathcal{S}}^{*}(\mathbf{X}))$ corresponds to the variance of the normal class in the student model when considering only the distillation loss $\mathcal{L}_{ScoreDA}$. Therefore, when $\mathcal{L}_{NormReg}$ is taken into account, the variance of the normal class in the student model satisfies $\sigma_{\mathcal{S}}^2 = Var(\mathcal{F}_{\mathcal{S}}^{*}(\mathcal{H})) < Var(\mathcal{F}_{\mathcal{S}}^{*}(\mathbf{X}))\leq \sigma_{\mathcal{T}}^2$, where $\mathcal{H}$ is the output of the student model.
    
    Since $S_T|y=0$ follows a sub-Gaussian distribution, there exists $\sigma_{\mathcal{T}}$ such that for $t>0$ it satisfies
    \begin{equation}\label{the_Gau}
        \mathbb{P}(S_T-\mu_0 \geq t) \leq exp(-\frac{t^2}{2\sigma_{\mathcal{T}}^2}) 
    \end{equation}
    Given an arbitrary fixed threshold $\tau$ ($\mu_0<\tau$), if the predicted score of a sample exceeds this threshold, the sample is classified as abnormal. According to Eq. (\ref{the_Gau}) and taking $t = \tau-\mu_0$, we have,
    \begin{equation}
         \mathbb{P}(S_s \geq \tau|y=0) \leq exp(-\frac{(\tau-\mu_0)^2}{2\sigma_{\mathcal{S}}^2}), \quad \mathbb{P}(S_T \geq \tau|y=0) \leq exp(-\frac{(\tau-\mu_0)^2}{2\sigma_{\mathcal{T}}^2})
    \end{equation}
    Since $\sigma_{\mathcal{S}}^2 < \sigma_{\mathcal{T}}^2$, $\mathbb{P}(S_s \geq \tau|y=0) < \mathbb{P}(S_T \geq \tau|y=0)$. It indicates that the probability of the student model wrongly identifying an abnormal node as a normal node is lower than that of the teacher model, thereby reducing FNR.

   Similarly, if the predicted score is below this threshold, the node is classified as normal. We compute the probabilities of the student model ($\mathbb{P}(S_s< \tau|y=0)$) and the teacher model ($\mathbb{P}(S_T < \tau|y=0)$) predicting it as a normal node as follows,
    \begin{equation}
    \mathbb{P}(S_s< \tau|y=0) = 1- \mathbb{P}(S_s \geq \tau|y=0) , \quad \mathbb{P}(S_T< \tau|y=0) = 1- \mathbb{P}(S_T \geq \tau|y=0).
    \end{equation}
    Since $\mathbb{P}(S_s \geq \tau|y=0) < \mathbb{P}(S_T \geq \tau|y=0)$, we have,
    \begin{equation}
        1- \mathbb{P}(S_s \geq \tau|y=0) \geq 1- \mathbb{P}(S_T \geq \tau|y=0).
    \end{equation}
    Finally, we derive $\mathbb{P}(S_s < \tau|y=0) > \mathbb{P}(S_T < \tau|y=0)$, which can equivalently be expressed as $\mathbb{P}(S_s < \tau|y=1) < \mathbb{P}(S_T < \tau|y=1)$, given that the anomaly detection task uses binary labels ($y=0$ for normal and $y=1$ for abnormal). It indicates that the probability of misclassifying a normal node as abnormal is lower in the student model than in the teacher model, leading to a reduced FPR.
\end{proof}

\section{Efficiency Analysis} \label{app:time_complexity}
\subsection{Time Complexity Analysis} \label{app:time_complexity}
In this section, we analyze the time complexity of GraphNC. Since it is designed as a plug-in framework that can be integrated with various teacher models to enhance GAD, we focus here only on the complexity of the additional module itself. We employ the combination of GCN and one MLP layer as the student model in GraphNC. The GCN takes $\mathcal{O}(EM+NMd)$, where $E$ is the number of edges in the graph, $N$ is the number of nodes, $M$ is the dimension of the attribute, and $d$ is the dimension of representation. The MLP used for feature transformation in GraphNC is $\mathcal{O}(NMd)$. In GraphNC, ScoreDA aims to align the student score with the output score distribution, which will take $O(N)$.  The NormReg aiming to  minimize the distance between two views, which will also take $O(N)$ 
The total complexity is $\mathcal{O}(EM+2NMd+2N) + \bf{T}$, where $\bf{T}$ is the time complexity of the corresponding teacher model.

\subsection{Runtime Analysis}
We provide a runtime comparison between the standalone teacher model and the model augmented with GraphNC below. The training and inference time comparison are shown in Fig. \ref{fig:runtime}.  It is clear that GraphNC is highly efficient for training, as its training process builds directly upon the pretrained teacher model. In addition, GGAD and GraphNC are both very efficient for inference.

\begin{figure}[t]
\centering
\begin{subfigure}[t]{0.5\textwidth}
  \centering
  \includegraphics[width=0.7\linewidth]{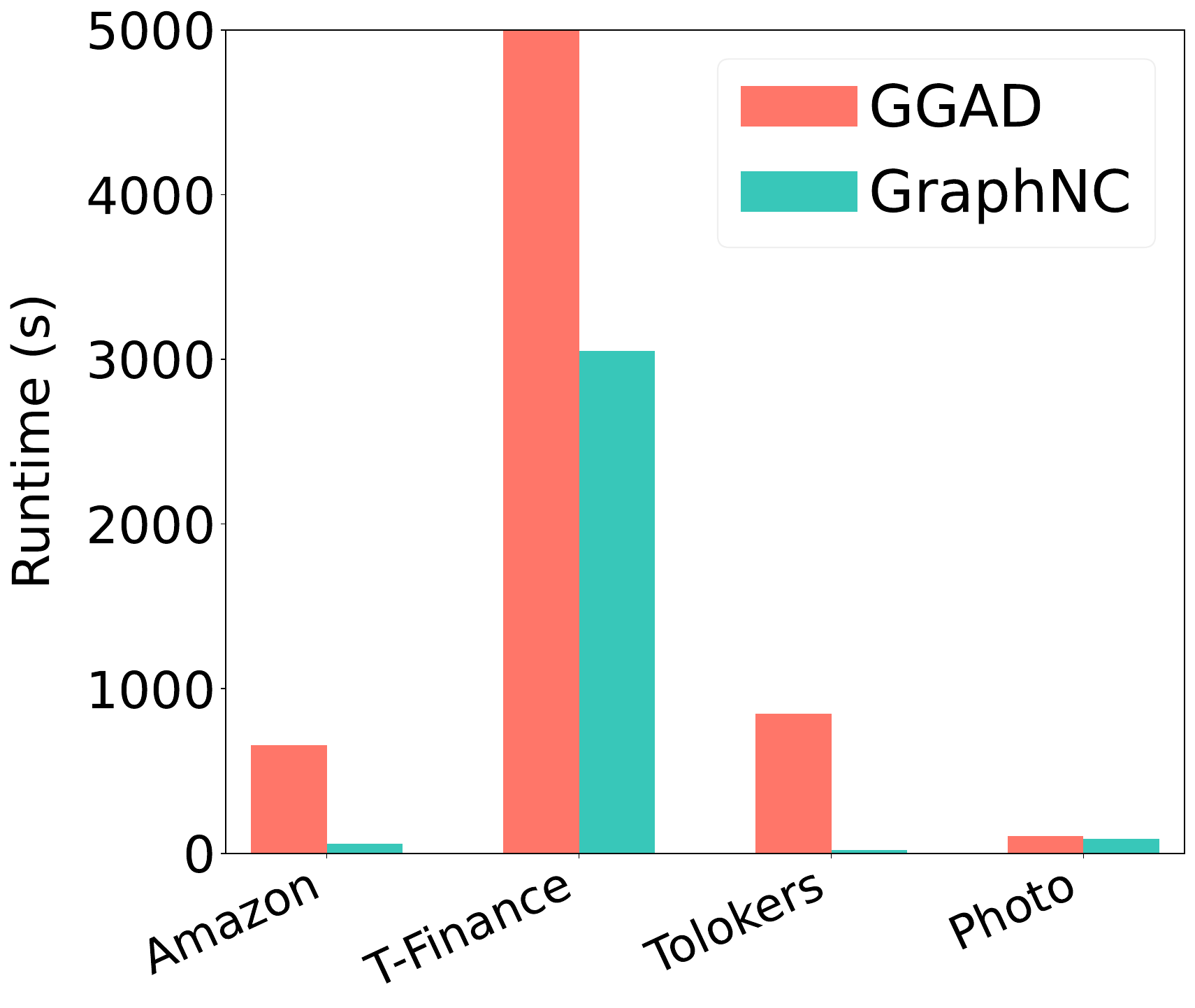}
  \caption{Running times comparison of training}
\end{subfigure}\hfill
\begin{subfigure}[t]{0.5\textwidth}
  \centering
  \includegraphics[width=0.7\linewidth]{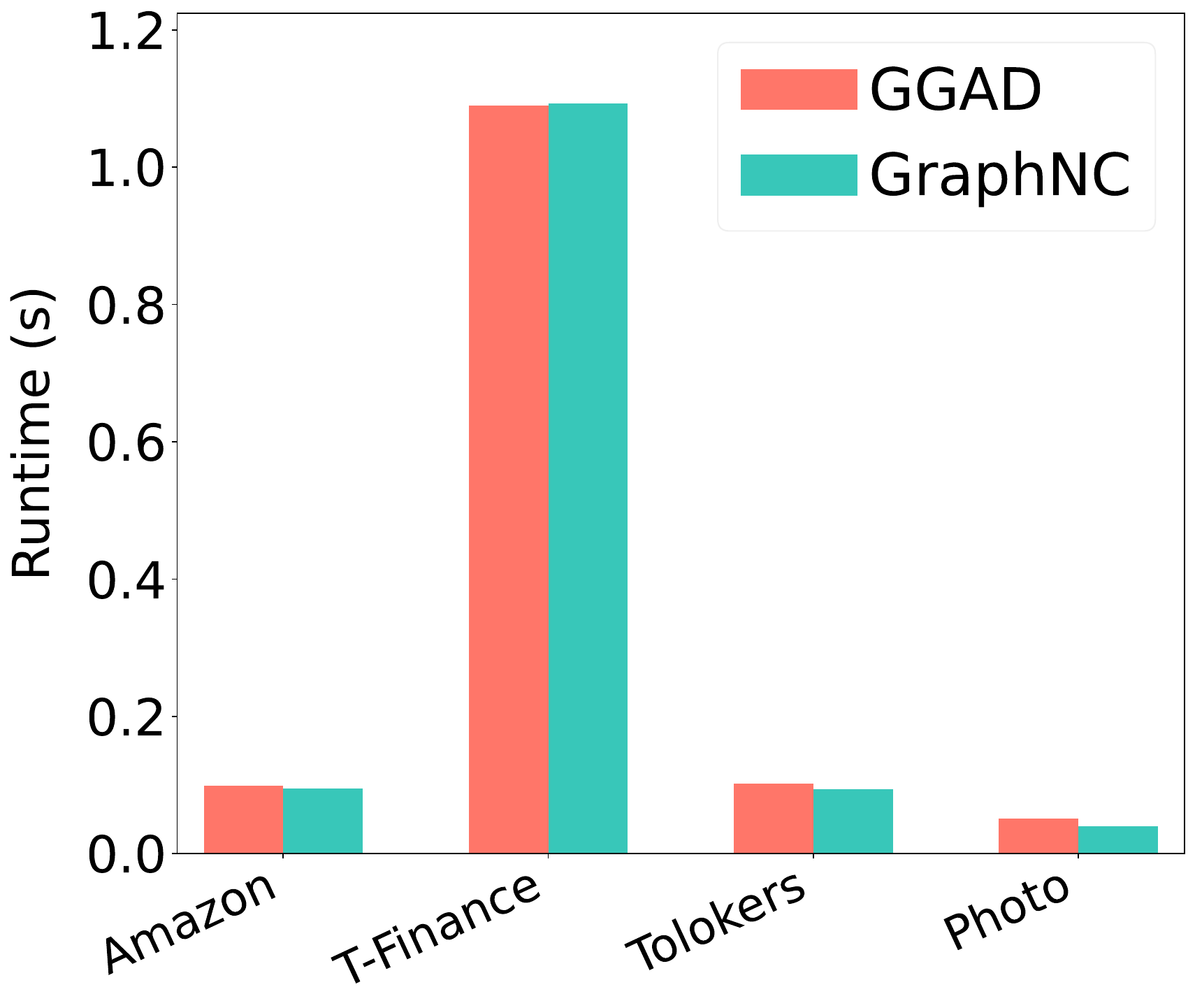}
  \caption{Running times comparison of inference}
\end{subfigure}
\caption{Training and inference runtimes (in seconds) on the six datasets for GGAD and GraphNC.}
\label{fig:runtime}
\end{figure}

\section{Algorithm}
The algorithm of GraphNC is summarized in Algorithm~\ref{alg:student_training_final}.

\begin{algorithm}[tb]
\caption{GraphNC}
\label{alg:student_training_final}
\begin{algorithmic}[1]
\STATE \textbf{Input:} Graph $\mathcal{G}=(\mathcal{V},\mathcal{E},\mathbf{X})$; pre-trained semi-supervised GAD model $\mathcal{F}_{\mathcal{T}}$; student model $\mathcal{F}_{S}$ with parameters $\Phi$; labeled normal nodes $\mathcal{V}_l \subset \mathcal{V}$; training epochs $E$; weight $\alpha$.
\STATE \textbf{Output:} Anomaly scores $\{y_i^{\mathcal{S}}\}_{v_i\in\mathcal{V}}$.

\STATE Initialize parameters $\Phi$ for the student model $\mathcal{F}_{S}$.
\STATE Obtain teacher score distribution $\mathcal{Y}^{\mathcal{T}}=\mathcal{F}_{\mathcal{T}}(\mathbf{X},\mathcal{G})=\{y_1^{\mathcal{T}},\dots,y_N^{\mathcal{T}}\}$.
\STATE Create augmented features $\tilde{\mathbf{X}}$ by applying $\operatorname{RandomMask}$ to features of nodes in $\mathcal{V}_l$.
\STATE \textit{// GraphNC Training}
\FOR{$e=1$ to $E$}
    \STATE Compute student representations $\mathbf{H}^{\mathcal{S}} \gets \mathcal{F}_{GNN}(\mathbf{X},\mathcal{G};\Omega)$.
    \STATE Compute augmented representations $\tilde{\mathbf{H}}^{\mathcal{S}} \gets \mathcal{F}_{GNN}(\tilde{\mathbf{X}},\mathcal{G};\Omega)$.
    \STATE Compute student scores $\mathcal{Y}^{\mathcal{S}} \gets \mathcal{F}_{MLP}(\mathbf{H}^{\mathcal{S}};\phi)$.

    \STATE \textit{// Anomaly Score Distribution Alignment (ScoreDA)}
    \STATE $\mathcal{L}_{ScoreDA} \gets \frac{1}{|\mathcal{V}|}\sum_{v_i\in\mathcal{V}} \|y_i^{\mathcal{S}}-y_i^{\mathcal{T}}\|_2^2$.

    \STATE \textit{// Perturbation-based Normality Regularization (NormReg)}
    \STATE $\mathcal{L}_{NormReg} \gets \frac{1}{|\mathcal{V}_l|}\sum_{v_i\in\mathcal{V}_l} \|\mathbf{h}_i^{\mathcal{S}}-\tilde{\mathbf{h}}_i^{\mathcal{S}}\|_2^2$.
    % NOTE: if you intentionally use a negative sign, then you must maximize L_NormReg instead of minimizing.

    \STATE $\mathcal{L}_{GraphNC} \gets \mathcal{L}_{ScoreDA} + \alpha \cdot \mathcal{L}_{NormReg}$.
    \STATE Update $\Phi=\{\Omega,\phi\}$ by minimizing $\mathcal{L}_{GraphNC}$.
\ENDFOR
\STATE \textit{// GraphNC Anomaly Score Inference}
\FOR{$i=1$ to $N$}
    \STATE Load the test graph $\mathcal{G}$ and optimal parameters $\Phi^*$ for the student model $\mathcal{F}_{S}$.
    \STATE  Anomaly Scoring $y_i^{\mathcal{S}} = \mathcal{F}_{ \mathcal{S}
    }(v_i, {\bf{x}}_i, \mathcal{G}, \Phi^*)$
\ENDFOR
% \RETURN $\{y_i^{\mathcal{S}}\}_{v_i\in\mathcal{V}}$.
\STATE \textbf{return} $\{y_i^{\mathcal{S}}\}_{v_i\in\mathcal{V}}$.

\end{algorithmic}
\end{algorithm}

%%%%%%%%%%%%%%%%%%%%%%%%%%%%%%%%%%%%%%%%%%%%%%%%%%%%%%%%%%%%%%%%%%%%%%%%%%%%%%%
%%%%%%%%%%%%%%%%%%%%%%%%%%%%%%%%%%%%%%%%%%%%%%%%%%%%%%%%%%%%%%%%%%%%%%%%%%%%%%%

\end{document}